\documentclass[pdflatex,sn-mathphys-num]{sn-jnl}

\usepackage{graphicx}%
\usepackage{multirow}%
\usepackage{amsmath,amssymb,amsfonts}%
\usepackage{amsthm}%
\usepackage{mathrsfs}%
\usepackage[title]{appendix}%
\usepackage{xcolor}%
\usepackage{textcomp}%
\usepackage{manyfoot}%
\usepackage{booktabs}%
\usepackage{algorithm}%
\usepackage{algorithmicx}%
\usepackage{algpseudocode}%
\usepackage{listings}%
\newcommand{\E}{\mathbb{E}}
\newcommand{\p}{\mathbb{P}}
\newtheorem{assumption}{Assumption}

\theoremstyle{thmstyleone}%
\newtheorem{theorem}{Theorem}
\newtheorem{proposition}[theorem]{Proposition}%

\theoremstyle{thmstyletwo}%
\newtheorem{remark}{Remark}%

\theoremstyle{thmstylethree}%
\newtheorem{definition}{Definition}%

\begin{document}

\title[Article Title]{Entropy Regularized Reinforcement Learning for Zero-Sum Stochastic Differential Games in a Regime-Switching Jump-Diffusion Process}


\author[1]{\fnm{Congde} \sur{Hu}}\email{hcd.math@gmail.com}

\author[2]{\fnm{Zhuo} \sur{Jin}}\email{zhuo.jin@mq.edu.au}

\author[3]{\fnm{Danping} \sur{Li}}\email{dpli@fem.ecnu.edu.cn}

\author*[1]{\fnm{Lin} \sur{Xu}}\email{xulinahnu@gmail.com}

\affil*[1]{\orgdiv{School of Mathematics and Statistics}, \orgname{Anhui Normal University}, \orgaddress{\city{Wuhu}, \postcode{241002}, \state{Anhui}, \country{China}}}

\affil[2]{\orgdiv{Department of Actuarial Studies and Business Analytics}, \orgname{Macquarie University}, \orgaddress{\city{Sydney}, \postcode{2109}, \state{NSW}, \country{Australia}}}

\affil[3]{\orgdiv{School of Statistics, KLATASDS-MOE, China Inclusive Ageing Finance Research Center}, \orgname{East China Normal University}, \orgaddress{\city{Shanghai}, \postcode{200062}, \state{Shanghai}, \country{China}}}


\abstract{To address parameter misspecification and sudden structural environmental changes in conventional stochastic differential game (SDG) frameworks, this paper introduces a distributional control approach that characterizes optimal strategies as probability distributions over actions, conditioned on the continuous state, the discrete regime state, and parameters. This forms a reinforcement learning framework for entropy-regularized zero-sum stochastic differential games (ERRL-ZSSDGs) in a regime-switching jump-diffusion process. Using the dynamic programming principle (DPP), we derive the associated coupled systems of Hamilton-Jacobi-Bellman-Isaacs (HJBI) equations, from which equilibrium strategies are expressed via gradients of the value function. For linear-quadratic problems, semi-analytical solutions for both value function and equilibrium strategies are obtained by solving a system of coupled ordinary differential equations (ODEs). In more general settings, an Actor-Critic policy improvement algorithm is developed to approximate the value functions and equilibrium policies across different regimes. The method is applied to an investment game, and numerical examples illustrate the effect of the temperature parameter and regime transitions on optimal policies and values.}

\keywords{Stochastic differential game, Reinforcement learning, Regime-switching, Jump-diffusion, Entropy regularized.}



\maketitle
\section{Introduction}\label{sec1}
Stochastic differential games (SDGs) provide a mathematical framework for modeling decision-making processes involving multiple agents in dynamic environments influenced by randomness and structural shifts. These games are typically formulated through stochastic differential equations (SDEs), which describe the evolution of the system state over time under the combined effects of deterministic and stochastic forces. SDGs are usually used in scenarios where agents have conflicting or cooperative objectives and the environment introduces uncertainty. The past decades have witnessed a rapidly expanding literature on the research of the SDGs, such as modeling construction and theoretical analysis (\cite{moon2018Risk}, \cite{lv2023linear}) , numerical methods (\cite{Kushner2004numerical}, \cite{song2008numerical}) and applications in industry (\cite{yeung2008cooperative}, \cite{song2023decision}), insurance (\cite{jin2013optimal}, \cite{wang2021stochastic}), economics and management (\cite{dockner2000differential}, \cite{savku2022stochastic}).

Zero-sum stochastic differential games (ZSSDGs) are a specialized class of SDGs in which the total payoffs of all players sum to zero, meaning any gain by one player is exactly balanced by an equivalent loss of another. This framework represents scenarios where players' interests are strictly adversarial, and the overall benefit or loss remains constant. In recent years, several papers have focused on theoretical research on ZSSDGs. For example, \cite{li2012saddle} establishes sufficient conditions for the existence of a saddle point in a time-dependent discrete Markov zero-sum game up to a specified stopping time. \cite{lv2020two} demonstrates the equivalence between the value function of a ZSSDG and the viscosity solution to the associated Hamilton-Jacobi-Bellman-Isaacs (HJBI) equation. \cite{lv2024solving} studies a class of zero-sum stopping games in a regime-switching model. Beyond these theoretical advances, given the broad applicability of ZSSDGs in diverse fields, research on ZSSDGs has been highly active and productive. For instance, in risk management, \cite{patil2023risk} considers risk-minimizing ZSSDG via path integral control; \cite{guan2016stochastic} applies the ZSSDG framework to analyze interactions between two defined-contribution (DC) pension plans.

To date, the majority of studies have presumed that the parameters of the controlled system are either constants or deterministic functions of the system. Within the framework of feedback control, optimal strategies depend on the model parameters, the current system state, and the feedback control function. However, in practical applications, these system parameters must be estimated. Due to data inaccuracies, methodological errors, or abrupt macroeconomic structural changes (regime shifts), the validity of these parameters may be compromised. This introduces risks such as estimation errors, failure to adapt to new environmental states, or the possibility of identifying a local optimizer as the global optimizer during the search for optimal strategies.

To address these challenges, we develop a novel framework for determining optimal strategies in SDGs using reinforcement learning (RL). RL inherently balances exploration and exploitation, thereby circumventing local optima and adapting to dynamic environments. This balance allows for a trade-off between exploring new actions and leveraging existing information to maximize rewards. By continuously experimenting with diverse strategies, the algorithm avoids being trapped in local optima. However, a significant challenge in the numerical design of RL agents is their tendency to inadequately explore various states, which can impede the discovery of superior strategies. To counteract this, we aim to distribute the output probabilities of the policy network across multiple actions, assigning non-zero probabilities to enhance exploration. Entropy quantifies the dispersion of this probability distribution: low entropy indicates concentration, while high entropy signifies greater randomness. The decision-maker seeks a high-entropy policy to achieve faster exploration.

This exploratory paradigm has been widely applied in various domains. \cite{wang2020reinforcement} pioneered the application of this approach to stochastic control problems in continuous time and space. Furthermore, \cite{wang2020continuous} and \cite{dai2023learning} consider the mean-variance problem in a reinforcement learning framework. \cite{firoozi2022exploratory} and \cite{guo2022entropy} discuss entropy regularization for mean-field games. \cite{hao2022entropy}, \cite{sun2023reinforcement} introduce the entropy-regularized RL framework into the study of SDGs and \citet{Huang2025convergence} analyzes the covergence of the policy iterations for entropy regularized control problems.

However, the optimal control problems discussed previously are predominantly driven by pure Brownian motion and have been extensively examined through the lens of RL. Recently, \cite{gao2024reinforcement} investigates an RL model with jump-diffusion processes, providing optimal exploratory solutions for the linear quadratic regulator (LQR) problem. This paper integrates recent advancements in RL and constructs a ZSSDG within the RL framework under a regime-switching jump-diffusion model. By incorporating a continuous-time Markov chain alongside the jump-diffusion process, our model simultaneously captures both discrete tail events (jumps) and persistent macroeconomic or environmental structural changes (regimes). In fact, 

There are three main contributions in this paper. First, this paper advances the theoretical and numerical understanding of ZSSDGs by integrating RL into a regime-switching jump-diffusion framework. Unlike traditional approaches that assume constant or simply state-dependent parameters, our work addresses inherent uncertainties, estimation errors, and structural shifts by introducing a novel distributional control mechanism. This mechanism ensures that the optimal strategies for both players are characterized by probability distributions that depend on the continuous state, the current discrete regime, and the system parameters, rather than deterministic functions.

Second, we establish an exploratory regime-switching ZSSDG framework for participants from both sides and apply it to two problems. This framework not only enhances the exploration-exploitation trade-off in RL but also mitigates the risk of converging to local optima under varying market regimes. 

Third, although regime-switching effects have been increasingly considered in various control and game-theoretic settings \cite{song2008numerical,Kushner2004numerical,gruen2012probabilistic,Huang2025convergence,feng2021optimal,Nguyen2021general,chen2022stochastic,gutierrez2024markovian}, state-dependent structural shifts are ubiquitous in real-world economic, financial, and engineering dynamical systems \cite{jin2013optimal,lv2024solving}. Regime-switching causes discrete transitions in the drift, diffusion, and jump intensity of the system according to external environments or internal states, essentially introducing extra stochastic structure and mode-dependent characteristics. Within the jump-diffusion and exploratory RL framework, regime-switching further complicates the model: on the one hand, the switching mechanism itself is stochastic and time-varying, making it incompatible with conventional parameter estimation and policy optimization methods; on the other hand, the three types of uncertainty, namely regime-switching, jump behavior, and randomized exploration, are mutually coupled, which breaks the logic of moment averaging and policy exploration construction in classical diffusion models and directly invalidates the original exploratory dynamics modeling approach.

The structure of this paper is organized as follows. Section~\ref{Section 2} constructs a theoretical framework for ZSSDGs under the regime-switching jump-diffusion model incorporating a learning process. Section~\ref{Section 3} applies the proposed framework to the regime-switching linear-quadratic ZSSDG problem. Section~\ref{Section 4} designs an actor-critic RL algorithm to solve the ZSSDG problem in general cases. Section~\ref{Section 5} applies this algorithm to a regime-switching investment game. Finally, Section~\ref{Section 6} concludes the paper.

\textbf{Notations:}
$\mathbb R^+$ denotes the positive reals, and
$\mathbb R_K^d:=\{x\in\mathbb R^d:\lvert x\rvert\le K\}$. Let $\mathbb R^{k\times\ell}$ be the space of
real $k\times\ell$ matrices. For a matrix $A$, $A^\top$ is its transpose, $\lvert A\rvert$ its Frobenius norm,
$A^2=AA^\top$, $A\circ B=\mathrm{tr}(AB^\top)$, and $A^{1/2}$ the square root of a positive semidefinite matrix.
For a function $f$, $f_x$ and $f_{xx}$ denote the first- and second-order partial derivatives. The symbol $\mathbb E_{t,x,i}[\cdot]$ denotes the conditional expectation given $X_t=x$, $\theta_t=i$. For $a,b\in\mathbb R$,
$a\wedge b=\min\{a,b\}$. The symbol $\delta$ denotes the variational operator. Finally, $U(B)$ denotes the uniform distribution on $B$, $\mathcal N(\mu,\Sigma)$ a Gaussian
distribution, and $\Phi$ the cdf of $\mathcal N(0,1)$.
\section{Formulation and problem}\label{Section 2}
This section develops the mathematical framework for a ZSSDG within a regime-switching jump-diffusion model incorporating learning. To facilitate comparison between the cases with and without learning, we adopt as a benchmark the jump-diffusion ZSSDG, introduced by \cite{biswas2012zero}, outlined in Subsection \ref{sub21}. Subsection \ref{sub22} then presents the construction of an ERRL framework for pure diffusion-controlled systems, discusses the challenges in extending it to systems with jumps and regime-switching, and proposes methods to address these challenges. Finally, Subsection \ref{sub23} provides the integrated model of ZSSDG with ERRL.
\subsection{Classical ZSSDG within regime-switching jump-diffusion model}\label{sub21}
For a fixed positive constant $T$ and $t\in[0,T]$, let $\left(\Omega_t, \mathcal{F}_t,\{\mathcal{F}_{t,s}\}_{t\leq s \leq T}, \p_t\right)$
be a filtered probability space satisfying the usual conditions. Let $\theta = \{\theta_s, t \leq s \leq T\}$ be a continuous-time Markov chain on this probability space, taking values in a finite state space $\mathcal{S} = \{1, 2, \ldots, m\}$, which represents the regime states. The generator matrix of this Markov chain is defined as $Q = (q_{ij})_{m \times m}$, where $q_{ij} \geq 0$ ($i \neq j$) represents the transition rate from regime $i$ to regime $j$, satisfying $q_{ii} = -\sum_{j=1, j \neq i}^{m} q_{ij}$. The ZSSDG is defined on $\left(\Omega_t, \mathcal{F}_t,\{\mathcal{F}_{t,s}\}_{t\leq s \leq T}, \p_t\right)$, and consists of the following controlled SDE
\begin{equation}\label{CLSDE}
\left\{\begin{array}{ll}
d X_s=b(s, X_{s}, \theta_{s} ; y_s, z_s) d s+\sigma(s, X_{s}, \theta_{s} ; y_s, z_s) d W_s\\
\qquad\quad+\displaystyle\int_{E} \eta(s, X_{s{-}}, \theta_{s{-}} ; y_s, z_s ; w) \tilde{N}(d s, d w),\, s \in(t, T],\\
X_t=x, \quad \theta_t=i,
\end{array} \right.
\end{equation}
as well as the payoff function
\begin{equation}\label{CLpayoff function}
J^{cl}(t, x, i ; y, z)
=\E_{t,x,i}\left[\int_t^T r(s, X_s, \theta_s ; y_s, z_s) d s+g(X_T, \theta_T)\right] ,
\end{equation}
where
\begin{equation*}
\begin{aligned}
b:&[0,T]\times\mathbb{R}^d\times\mathcal{S}\times\mathcal{Y}\times\mathcal{Z}\rightarrow\mathbb{R}^d,\quad E := \mathbb{R}^\ell \setminus \{0\},\\
\sigma:&[0,T]\times\mathbb{R}^d\times\mathcal{S}\times\mathcal{Y}\times\mathcal{Z}\rightarrow\mathbb{R}^{d \times k},\\
\eta:&[0,T]\times\mathbb{R}^d\times\mathcal{S}\times\mathcal{Y}\times\mathcal{Z}\times\mathbb{R}^\ell\rightarrow\mathbb{R}^{d\times\ell},\\
r:&[0,T]\times\mathbb{R}^d\times\mathcal{S}\times\mathcal{Y}\times\mathcal{Z}\rightarrow\mathbb{R},\quad g:\mathbb{R}^d\times\mathcal{S}\rightarrow\mathbb{R},
\end{aligned}
\end{equation*}
$y$ and $z$ are two predictable control processes with values in compact metric spaces $\mathcal{Y}\subseteq \mathbb{R}^{n1}$ and $\mathcal{Z}\subseteq \mathbb{R}^{n2}$, respectively. $W_s$ is a $k$-dimensional Brownian motion on $\left(\Omega_t, \mathcal{F}_t,\{\mathcal{F}_{t,s}\}_{t\leq s \leq T}, \p_t\right)$. Furthermore, $N(ds,dw):=(N_1(d s, d w_1), \cdots, N_{\ell}(d s, d w_{\ell}))^{\top}$ is a Poisson random measure on $[0, T]\times E$ with intensity measure $\nu(d w):=(\nu_1(d w),\cdots,\nu_\ell(d w))^{\top}$ and compensated Poisson measure is $\tilde{N}(d s, d w)=N(ds,dw)-\nu(d w)ds$. $\eta_k$ represents the $k$-th column vector of $\eta$.

For a function $h(x)$ on $\mathbb{R}^d$ that satisfies local Lipschitz continuity and linear growth, its Lipschitz norm $||h||_K$ is defined as
\begin{equation}
||h||_K := \sup_{x \in \mathbb{R}^d} \frac{|h(x)|}{1+|x|} + \sup_{\substack{x, y \in \mathbb{R}_K^d; x \neq y}} \frac{|h(x) - h(y)|}{|x - y|}.
\end{equation}

\begin{assumption}\label{Assumption 1} The following conditions hold.
\item[$(\romannumeral 1)$] The spaces $\mathcal{Y}$ and $\mathcal{Z}$ are compact metric spaces, and for each $i \in \mathcal{S}$, $\sigma$, $r$, $b$, and $\eta$ are continuous functions of $t$, $x$, $y$, $z$ (for $\eta$, $w$ are also included);
\item[$(\romannumeral 2)$] $\exists C_K > 0$ such that $\left(||b||_K+||\sigma||_K\right)(t,\cdot,i; y, z)\leq C_K$ and $||\eta(t, \cdot, i; y, z; w)||_K \leq C_K \min(|w|, 1)$ for all $t$, $i$, $y$, $z$ and $w$;
\item[$(\romannumeral 3)$] The L\'{e}vy measure $\nu$ is a positive measure that satisfies $\int_{E} \min\left(|w|^2, 1\right) \nu(dw) < \infty$.
\item[$(\romannumeral 4)$] $\exists C > 0,p\geq 1$ s.t. $\forall t,x,i,y,z$, $|r(t,x,i; y, z)|+|g(x,i)|\leq C\left(1+|x|^p\right)$;
\item[$(\romannumeral 5)$] The Markov chain $\theta$ is independent of the Brownian motion $W$ and the Poisson random measure $N$.
\end{assumption}

Under Assumption \ref{Assumption 1}, the conditions guarantee that the SDE \eqref{CLSDE} and the payoff function \eqref{CLpayoff function} are well-defined.

In the ZSSDG, player $I$ and player $II$ maximize and minimize the payoff function $J^{cl}$ by controlling $y$ and $z$, respectively. At any time $s \in (t, T)$, both players observe the historical states $(X, \theta, y, z)(r)$ for $r < s$. As a result, the player who acts first at time $s$ is clearly at a disadvantage. Following \cite{fleming1989existence}, the problem is formalized into two approximate games: the upper game and the lower game. In the upper game, player $I$ can know $z_s$ before choosing $y_s$; in the lower game, player $II$ can know $y_s$ before choosing $z_s$. \cite{biswas2012zero} has established the DPP for this game under a single regime (i.e., $\mathcal{S} = \{1\}$), and provided a verification theorem for viscosity solutions.

\subsection{Challgens for building ERRL regime-switching jump-diffusion SDEs}\label{sub22}
We perceive that there are three main aspects of the model proposed by \cite{biswas2012zero} that can be further explored.

First, it assumes that all parameters are known, whereas in practice, both model parameters and structures are often uncertain. The traditional \lq\lq separation principle" decouples parameter estimation from decision optimization: historical data are first used to estimate the parameters, and the estimated values are then treated as true inputs for optimization (\cite{gao2025bayesian}). While this approach may be valid asymptotically, in many cases the sample size required for reliable estimation far exceeds the available data (\cite{merton1980estimating}).

Second, deterministic optimal strategies derived under known parameters are critically sensitive to estimation errors, which tend to be amplified through the optimization procedure, thereby undermining the practical applicability of theoretical solutions. In contrast, modern RL operates under a fundamentally different paradigm. While RL also presumes that data originate from an underlying structural model-such as a Markov process---it does not rely on prior knowledge or explicit parameter estimation. Instead, optimal policies are learned directly through interaction with data. This rationale naturally motivates the incorporation of randomized exploration, which broadens the search space, enhances flexibility, and supports learning through trial and error. Pioneering work by \cite{wang2020reinforcement} and \cite{wang2020continuous} introduced this approach to stochastic control in diffusion models, while \cite{sun2023reinforcement} later formulated an exploratory ZSSDG framework for linear-quadratic regulators in diffusion processes.

Third, although regime-switching effects have been increasingly considered in various control and game-theoretic settings \cite{song2008numerical,Kushner2004numerical,gruen2012probabilistic,Huang2025convergence,feng2021optimal,Nguyen2021general,chen2022stochastic,gutierrez2024markovian}, state-dependent structural shifts are ubiquitous in real-world economic, financial, and engineering dynamical systems \cite{jin2013optimal,lv2024solving}. Regime-switching causes discrete transitions in the drift, diffusion, and jump intensity of the system according to external environments or internal states, essentially introducing extra stochastic structure and mode-dependent characteristics. Within the jump-diffusion and exploratory RL framework, regime-switching further complicates the model: on the one hand, the switching mechanism itself is stochastic and time-varying, making it incompatible with conventional parameter estimation and policy optimization methods; on the other hand, the three types of uncertainty, namely regime-switching, jump behavior, and randomized exploration, are mutually coupled, which breaks the logic of moment averaging and policy exploration construction in classical diffusion models and directly invalidates the original exploratory dynamics modeling approach.

Given the importance of regime-switching jump-diffusion models in SDGs, developing an ERRL framework for ZSSDGs with regime-switching and exploratory dynamics is of great theoretical and practical significance. This subsection outlines the core challenges in constructing such a model and presents the methodology adopted to address them.

For background, in the pure diffusion setting with regime-switching, the state dynamics are governed by a continuous-time Markov chain $\theta_t \in \mathcal{S} = \{1, 2, \dots, m\}$ and are given by
\[
dX_t = b(t, X_t, \theta_t, a_t) dt + \sigma(t, X_t, \theta_t, a_t) dW_t.
\]
When employing a stochastic policy $\boldsymbol{\pi}(\cdot|t,x,i)$ conditional on the current regime $i \in \mathcal{S}$, applying the Law of Large Numbers (LLN) yields the averaged drift and diffusion coefficients
\[
\begin{aligned}
   \frac{1}{N} \sum_{k=1}^N \Delta X_t^k  &\approx \int_{\mathcal{A}} b(t, x, i, a) \,\boldsymbol{\pi}(a|t,x,i) da \, \Delta t \\
   &= \mathbb{E}_{a \sim \boldsymbol{\pi}}[b(t,x,i,a)]\Delta t = \tilde{b}(t, x, i, \boldsymbol{\pi})\Delta t, \\
   \frac{1}{N}\sum_{k=1}^N (\Delta X_t^{k})^2 &\approx \int_{\mathcal{A}} \sigma^2(t, x, i, a)\, \boldsymbol{\pi}(a|t,x,i) da\, \Delta t \\
   &= \mathbb{E}_{a \sim \boldsymbol{\pi}}[\sigma^2(t,x,i,a)]\Delta t = \tilde{\sigma}^2(t, x, i, \boldsymbol{\pi})\Delta t.
\end{aligned}
\]
Consequently, the exploratory state process is
\[
d\tilde{X}_t = \tilde{b}(t, \tilde{X}_t, \theta_t, \boldsymbol{\pi}) dt + \tilde{\sigma}(t, \tilde{X}_t, \theta_t, \boldsymbol{\pi}) dW_t.
\]
The key observation is that a diffusion process is fully characterized by its first and second moments (drift and volatility); thus, averaging these moments suffices to construct the exploratory dynamics.

However, this straightforward moment-averaging approach fails for jump-diffusion processes. The state dynamics in this generalized regime-switching setting are
\begin{equation}
dX_t = b(t, X_t, \theta_t, a_t) dt + \sigma(t, X_t, \theta_t, a_t) dW_t+ \int_{\mathbb{R}^+} \eta(t, X_{t-}, \theta_{t-}, a_t; w) \tilde{N}(dt, dw),
\end{equation}
where $\tilde{N}(dt, dw) = N(dt, dw) - \nu(dw)dt$ is the compensated Poisson random measure and $\nu(dw)$ is the L\'evy measure. A jump process is characterized by three components: the drift, the diffusion, and the L\'evy measure describing the jump distribution.

The difficulty arises if one attempts to naively average the jump coefficient as $\tilde{\eta}(t, x, i, \boldsymbol{\pi}; w) = \mathbb{E}_{a \sim \boldsymbol{\pi}}[\eta(t,x,i,a;w)]$ and then use the term $\int_{\mathbb{R}^+} \tilde{\eta}(t, x, i, \boldsymbol{\pi}; w) \tilde{N}(dt, dw)$. This leads to two fundamental problems. First, it risks losing the discrete "event nature" of jumps; a non-zero $\tilde{\eta}$ might be misinterpreted as a continuous drift rather than discrete events. Second, and more critically, it alters the true distribution of jump sizes. For instance, consider a policy that randomizes between two actions: one causing a jump of magnitude $M$ with probability $p$, and another causing a jump of magnitude $m$ with probability $1-p$. Simple averaging yields an effective jump size of $pM + (1-p)m$, which represents a dynamic fundamentally different from the original mixture of large ($M$) and small ($m$) jumps.

Mathematically, the core difficulty is nonlinearity. Consider the jump and regime-switching components of the infinitesimal generator for a fixed action $a$ and current regime $i$, with a transition rate matrix $Q = (q_{ij})_{m \times m}$:
\[
\begin{aligned}
\mathcal{L}^a f(t,x,i) \supset&
 \int_{\mathbb{R}^+} \Big[ f(t, x+\eta(t,x,i,a;w), i) - f(t,x,i)\\
&- \eta(t,x,i,a;w) \cdot  f_x(t,x,i) \Big] \nu(dw)+\sum_{j=1}^m q_{ij} f(t,x,j) .
\end{aligned}
\]
While the regime transition sum is linear and averages trivially, the jump term does not. Averaging this expression over actions sampled from $\boldsymbol{\pi}$ yields a double integral
\begin{equation}\label{tab1}
\begin{aligned}
\int_{\mathcal{A}}& \int_{\mathbb{R}^+} \Big[ f(t, x+\eta(t,x,i,a;w), i) - f(t,x,i) \\
&- \eta(t,x,i,a;w) \cdot  f_x(t,x,i) \Big] \nu(dw) \boldsymbol{\pi}(a|t,x,i) da.
\end{aligned}
\end{equation}
Because $f(t,x+\eta, i)$ is a nonlinear function of $\eta$, the expression in \eqref{tab1} cannot be simplified to a form with a single averaged $\tilde{\eta}$, i.e.,
\[
\begin{aligned}
\int_{\mathbb{R}^+}& \Big[ f(t, x+\tilde{\eta}(t,x,i,\boldsymbol{\pi};w), i)\\
&- f(t,x,i) - \tilde{\eta}(t,x,i,\boldsymbol{\pi};w) \cdot  f_x(t,x,i) \Big] \nu(dw),
\end{aligned}
\]
except in the trivial case where $f$ is linear in $x$.

To address this challenge, we adopt the grid-sampling approach from \cite{gao2024reinforcement} to avoid measurability issues associated with continuous-time action randomization. Actions are sampled only on a discrete time grid $\mathbb{S} = \{0=t_0 < t_1 < \dots < t_N=T\}$. On the interval $(t_{k-1}, t_k]$, the action is held constant as $a_k = G^{\boldsymbol{\pi}}(t_{k-1}, X_{t_{k-1}}, \theta_{t_{k-1}}, U_k)$, where $U_k \sim \text{Uniform}[0,1]$. Here, $G^{\boldsymbol{\pi}}:[0, T] \times \mathbb{R} \times \mathcal{S} \times [0,1] \rightarrow \mathcal{A}$ is a function that transforms a uniform random vector into an action distributed according to $\boldsymbol{\pi}(\cdot|t,x,i)$; formally, for any $(t,x,i)$, if $U \sim \text{Uniform}[0,1]$, then $a=G^{\boldsymbol{\pi}}(t, x, i, U) \sim \boldsymbol{\pi}(\cdot \mid t, x, i)$. The grid-sampled state process then follows
\[
\begin{aligned}
dX_s^{\boldsymbol{\pi},\mathbb{S}} =& b(s, X_{s}^{\boldsymbol{\pi},\mathbb{S}}, \theta_{s}, a_k) ds+ \sigma(s, X_{s}^{\boldsymbol{\pi},\mathbb{S}}, \theta_{s}, a_k) dW_s \\
&+\int_{\mathbb{R}^+} \eta(s, X_{s-}^{\boldsymbol{\pi},\mathbb{S}}, \theta_{s-}, a_k;w) \tilde{N}(ds, dw), \,s \in (t_{k-1}, t_k].
\end{aligned}
\]
The key to this solution is to reparameterize the action randomization using $G^{\boldsymbol{\pi}}$ and then embed it directly into an enriched Poisson random measure. Define a new Poisson random measure $N'(dt, dw, du)$ on $\mathbb{R}^+ \times \mathbb{R}^+ \times [0,1]$ with intensity measure $\nu(dw) du dt$, and denote its compensated measure by $\tilde{N}^{\prime}$. The exploratory state process is then formally defined as
\[
\begin{aligned}
d\tilde{X}_t^{\boldsymbol{\pi}} =&\tilde{b}(t, \tilde{X}_{t}^{\boldsymbol{\pi}}, \theta_{t}, \boldsymbol{\pi}) dt + \tilde{\sigma}(t, \tilde{X}_{t}^{\boldsymbol{\pi}}, \theta_{t}, \boldsymbol{\pi}) dW_t \\
&+\int_{\mathbb{R}^+ \times [0,1]} \eta(t, \tilde{X}_{t-}^{\boldsymbol{\pi}}, \theta_{t-}, G^{\boldsymbol{\pi}}(t,\tilde{X}_{t-}^{\boldsymbol{\pi}}, \theta_{t-}, u); w)\tilde{N}^{\prime}(dt, dw, du).
\end{aligned}
\]
The corresponding infinitesimal generator for this process at state $(t, x, i)$ is
\[
\begin{aligned}
\mathcal{L}^{\boldsymbol{\pi}} f&(t,x,i)= \tilde{b} \cdot f_x + \frac{1}{2}\tilde{\sigma}^2 \cdot f_{xx} + \sum_{j=1}^m q_{ij} f(t,x,j) \\
&+ \int_{\mathbb{R}^+ \times [0,1]} \Big[ f(t, x+\eta(t,x,i,G^{\boldsymbol{\pi}}(t,x,i,u);w), i)- f(t,x,i) - \eta \cdot f_x \Big] \nu(dw) du.
\end{aligned}
\]
Crucially, this generator is exactly the average of the classical generators across actions:
\[
\mathcal{L}^{\boldsymbol{\pi}} f(t,x,i) = \int_{\mathcal{A}} \mathcal{L}^a f(t,x,i) \boldsymbol{\pi}(a|t,x,i) da.
\]
Philosophically, this method does not attempt to simplify the inherent nonlinearity $f(t,x+\eta(a), i)$. Instead, it accepts this nonlinearity and embeds the policy randomization into the jump measure via the function $G^{\boldsymbol{\pi}}$. This construction yields a well-defined stochastic process whose generator automatically computes the correct average $\mathbb{E}_{a \sim \boldsymbol{\pi}}[f(t,x+\eta(a), i)]$, thereby preserving the essential structure needed for RL algorithms.

\subsection{Entropy regularized exploratory SDG framework}\label{sub23}
A key concept in RL is exploring the unknown environment through randomized actions. Randomization employs probability distributions (measures) to substitute deterministic actions for the purpose of exploration. Entropy is used to quantify the dispersion of these probability distributions, where low entropy indicates concentration and high entropy signifies greater randomness. While there are various methods to determine distributional controls, this paper focuses solely on feedback distributional controls, i.e., the current policy depends only on the current state $t, x$ and the regime state $i$.

Let $\boldsymbol{\pi}$ : $(t, x, i) \in[0, T] \times \mathbb{R}^d \times \mathcal{S} \rightarrow \boldsymbol{\pi}(\cdot \mid t, x, i) \in \mathcal{P}(\mathcal{Y})$ $(resp.,\boldsymbol{\gamma}$ : $(t, x, i) \in[0, T] \times \mathbb{R}^d \times \mathcal{S} \rightarrow \boldsymbol{\gamma}(\cdot \mid t, x, i) \in \mathcal{P}(\mathcal{Z}))$ be a given stochastic feedback control, where $\mathcal{P}(\mathcal{Y})$ $(resp.,\mathcal{P}(\mathcal{Z}))$ is the set of probability density functions defined on $\mathcal{Y}$ $(resp.,\mathcal{Z})$. Player $I$ samples an action $y$ from $\boldsymbol{\pi}(\cdot|t, x, i)$, which can be represented as $y=G_{\boldsymbol{\pi}}(t,x,i,u)$, where $G_{\boldsymbol{\pi}}$ is a measurable function and $u$ follows a uniform distribution on $[0, 1]^{n1}$. Player $II$ samples an action $z$ from $\boldsymbol{\gamma}(\cdot|t, x, i)$, which can be expressed as $z=M_{\boldsymbol{\gamma}}(t,x,i,v)$, where $M_{\boldsymbol{\gamma}}$ is a measurable function and $v$ follows a uniform distribution on $[0, 1]^{n2}$. Next, we define the random control processes $\boldsymbol{\pi}_s = \boldsymbol{\pi}(\cdot | s, X_s, \theta_s)$ and $\boldsymbol{\gamma}_s = \boldsymbol{\gamma}(\cdot | s, X_s, \theta_s)$, and will also use the more concise notations $G_{\boldsymbol{\pi}_{s}}(u)$ for $G_{\boldsymbol{\pi}}(s, X_s, \theta_s,u)$ (resp., $M_{\boldsymbol{\gamma}_{s}}(v)$ for $M_{\boldsymbol{\gamma}}(s, X_s, \theta_s,v)$). Based on the method in \cite{gao2024reinforcement}, we extend the control process to an exploratory control process, and the corresponding regime-switching exploratory SDE is
\begin{equation}\label{SDE}
\left\{\begin{array}{ll}
dX_s=&\tilde{b}(s, X_{s}, \theta_{s} ; \boldsymbol{\pi}_s, \boldsymbol{\gamma}_s)ds+\tilde{\sigma}(s, X_{s}, \theta_{s} ; \boldsymbol{\pi}_s, \boldsymbol{\gamma}_s) d W_s\\
&+{\displaystyle\int}_{E\times[0,1]^{n1}\times[0,1]^{n2}} \eta(s, X_{s-}, \theta_{s-} ; G_{\boldsymbol{\pi}_s}(u), M_{\boldsymbol{\gamma}_s}(v); w)\tilde{N}^{\prime}(ds, dw, du, dv), \\
X_t=&x, \quad \theta_t=i,
\end{array} \right.
\end{equation}
where
\begin{equation*}
\begin{aligned}
\tilde{b}(t, x, i;\boldsymbol{\pi}, \boldsymbol{\gamma})&:=\int_{\mathcal{Y}}\left[\int_{\mathcal{Z}} b(t, x, i; y, z) \boldsymbol{\gamma}\,dz\right] \boldsymbol{\pi}\,dy,\\
\tilde{\sigma}(t, x, i; \boldsymbol{\pi}, \boldsymbol{\gamma})&:=\left\{\int_{\mathcal{Y}}\left[\int_{\mathcal{Z}} \sigma^2(t, x, i; y, z) \boldsymbol{\gamma}\,dz\right] \boldsymbol{\pi}\,dy\right\}^{1 / 2}.
\end{aligned}
\end{equation*}
$\tilde{N}^{\prime}(ds,dw,du,dv) = N^{\prime}(ds,dw,du,dv) - ds\nu(dw)dudv$ is the extended $\ell$-dimensional compensated Poisson measure, and $u$ $(resp.,v)$ is a real-valued vector following the $n1$$(resp.,n2)$-dimensional uniform distribution $\mathcal{U}([0, 1]^{n1})$ $(resp.,\mathcal{U}([0, 1]^{n2})$.

The exploratory payoff function is
\begin{equation}\label{payoff function}
\begin{array}{ll}
J(t, x, i; {\boldsymbol{\pi}}, {\boldsymbol{\gamma}})
=&\E_{t,x,i}\bigg[{\displaystyle\int}_t^T \Big(\tilde{r}(s, X_{s}, \theta_s; \boldsymbol{\pi}_s, \boldsymbol{\gamma}_s)-\lambda_1{\displaystyle\int}_{\mathcal{Y}} \boldsymbol{\pi}_s\ln \boldsymbol{\pi}_s \,dy\\
&+\lambda_2{\displaystyle\int}_{\mathcal{Z}} \boldsymbol{\gamma}_s \ln \boldsymbol{\gamma}_s\, dz\Big)ds+g(X_{T}, \theta_T)\bigg],
\end{array}
\end{equation}
where
\[
\tilde{r}(t, x, i; \boldsymbol{\pi},\boldsymbol{\gamma}):=\int_{\mathcal{Y}}\left[\int_{\mathcal{Z}} r(t, x, i; y, z)\, \boldsymbol{\gamma}\,dz\right] \boldsymbol{\pi}\,dy,
\]
and $\lambda_1$, $\lambda_2>0$ are called the \textit{temperature parameters}.
\begin{remark}
In ERRL, entropy encourages exploration by increasing policy randomness. Player $I$ maximizes the payoff, so the positive entropy term $-\lambda_1 \int_{\mathcal{Y}} \boldsymbol{\pi} \ln \boldsymbol{\pi} dy$ acts as a reward. Player $II$ minimizes the payoff, making the negative entropy term $\lambda_2 \int_{\mathcal{Z}} \boldsymbol{\gamma} \ln \boldsymbol{\gamma} dz$ a reward for them. The parameters $\lambda_1$ and $\lambda_2$ control the \textit{temperature parameter}, influencing how quickly each agent updates its policy using new information and the entropy incentive.
\end{remark}
\begin{assumption}\label{Assumption 2}
The following conditions hold.
\item[$(\romannumeral 1)$]
For all $t\in[0,T]$, $x\in \mathbb{R}_K^d$, $i \in \mathcal{S}$, $y$, $y^{\prime}\in\mathcal{Y}$, $z$, $z^{\prime}\in\mathcal{Z}$, and $w\in\mathbb{R}^\ell$, there exists $C_K>0$, such that $\int_{\mathbb{R}^+}\left|\eta_k(t, x, i; y, z; w)-\eta_k(t, x, i; y^{\prime}, z^{\prime}; w)\right|\nu_k(d w) \leq C_K\left(\left|y-y^{\prime}\right|+\left|z-z^{\prime}\right|\right)$;
\item[$(\romannumeral 2)$]
For all $t\in[0,T]$, $x$, $x^{\prime}\in \mathbb{R}_K^d$ and $i \in \mathcal{S}$  there exists a positive constant $C_K$ such that
$\int_{[0,1]^{n1}} \left|G_{\boldsymbol{\pi}}(t,x,i,u) - G_{\boldsymbol{\pi}}(t,x^{\prime},i,u)\right| du
+\int_{[0,1]^{n2}} \left|M_{\boldsymbol{\gamma}}(t,x,i,v) - M_{\boldsymbol{\gamma}}(t,x^{\prime},i,v)\right| dv\leq C_K\left|x-x^{\prime}\right|.$
\end{assumption}
\begin{definition}
(Admissible control).
The controls $\boldsymbol{\pi} = \boldsymbol{\pi}(\cdot \mid t, x, i)$,  $\boldsymbol{\gamma} = \boldsymbol{\gamma}(\cdot \mid t, x, i)$ are called admissible, and denote the sets by $\mathcal{M}$ and $\mathcal{N}$, if
\item[$(\romannumeral 1)$]  $\forall(t, x, i) \in[0, T] \times \mathbb{R}^d\times\mathcal{S}$, $\boldsymbol{\pi}(\cdot\mid t,x,i) \in \mathcal{P}(\mathcal{Y})$, and $\boldsymbol{\gamma}(\cdot\mid t,x,i)\in \mathcal{P}(\mathcal{Z})$ a.s.;
\item[$(\romannumeral 2)$]  For all $t$, $x$, $x^{\prime}\in \mathbb{R}_K^d$ and $i \in \mathcal{S}$ there exists a positive constant $C_K$ such that, $\int_{\mathcal{Y}}\left|\boldsymbol{\pi}(y \mid t, x, i)-\boldsymbol{\pi}(y \mid t, x^{\prime}, i)\right| dy
+\int_{\mathcal{Z}}\left|\boldsymbol{\gamma}(z \mid t, x, i)-\boldsymbol{\gamma}(z \mid t, x^{\prime}, i)\right| dz\leq C_K\left|x-x^{\prime}\right|;$
\item[$(\romannumeral 3)$] For all $t$, $x$, $i \in \mathcal{S}$ and some $p\geq2$, there exists a positive constant $C$ such that $\int_{\mathcal{Y}}\left|\ln \boldsymbol{\pi}(y \mid t, x, i)\right| \boldsymbol{\pi}(y \mid t, x, i) \,dy
+\int_{\mathcal{Z}}\left|\ln \boldsymbol{\gamma}(z \mid t, x, i)\right| \boldsymbol{\gamma}(z \mid t, x, i)\,dz\leq C\left(1+\left|x\right|^p\right);$
\item[$(\romannumeral 4)$] For all $t$, $x$, $i \in \mathcal{S}$ and some $p\geq1$, there exists a positive constant $C$ such that $\int_{\mathcal{Y}}\left|y\right|^{p} \boldsymbol{\pi}(y \mid t, x, i)\,dy+\int_{\mathcal{Z}}\left|z\right|^{p} \boldsymbol{\gamma}(z \mid t, x, i)\, dz
\leq C\left(1+\left|x\right|^p\right).$
\end{definition}
\begin{proposition}\label{Proposition 1}
Under Assumption \ref{Assumption 1} and Assumption \ref{Assumption 2}, for any admissible controls $\boldsymbol{\pi} = \boldsymbol{\pi}(\cdot \mid t, x, i)\in\mathcal{M}$ and $\boldsymbol{\gamma} = \boldsymbol{\gamma}(\cdot \mid t, x, i)\in\mathcal{N}$, the exploratory SDE \eqref{SDE} has a unique strong solution.
\end{proposition}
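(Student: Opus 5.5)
Our plan is to reduce Proposition~\ref{Proposition 1} to the standard existence and uniqueness theorem for SDEs driven by a Brownian motion and a compensated Poisson random measure with Markov-modulated coefficients. The theorem requires local Lipschitz continuity together with linear growth of the coefficients; see, e.g., the regime-switching jump-diffusion results underlying \cite{biswas2012zero,gao2024reinforcement}. Since $\theta$ is a finite-state Markov chain independent of $(W,N')$ by Assumption~\ref{Assumption 1}(v), its jump times $\tau_1<\tau_2<\cdots$ are a.s.\ finite in number on $[t,T]$. We therefore solve the equation successively on each random interval $[\tau_k,\tau_{k+1})$, where the regime is frozen at some $i\in\mathcal{S}$, and paste the solutions together. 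The restarted initial value $X_{\tau_k}$ is $\mathcal{F}_{t,\tau_k}$-measurable, and pathwise uniqueness on each piece yields global uniqueness. It thus suffices to treat a fixed regime $i$ and prove existence and uniqueness for the equation with coefficients $\tilde b(s,x,i;\boldsymbol{\pi},\boldsymbol{\gamma})$, $\tilde\sigma(s,x,i;\boldsymbol{\pi},\boldsymbol{\gamma})$ and jump coefficient $\tilde\eta(s,x,i;w,u,v):=\eta(s,x,i;G_{\boldsymbol{\pi}}(s,x,i,u),M_{\boldsymbol{\gamma}}(s,x,i,v);w)$ with respect to $\tilde N'$.

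The key steps verify local Lipschitz and linear growth estimates for these averaged coefficients. For the drift, we split $\tilde b(s,x,i)-\tilde b(s,x',i)$ into two terms. The first is $\iint [b(s,x,i;y,z)-b(s,x',i;y,z)]\boldsymbol{\pi}(y|x)\boldsymbol{\gamma}(z|x)\,dy\,dz$, bounded by $C_K|x-x'|$ via Assumption~\ref{Assumption 1}(ii). The second is $\iint b(s,x',i;y,z)[\boldsymbol{\pi}(y|x)\boldsymbol{\gamma}(z|x)-\boldsymbol{\pi}(y|x')\boldsymbol{\gamma}(z|x')]\,dy\,dz$. For it we use the boundedness $\sup_{y,z}|b(s,x',i;y,z)|\le C_K(1+K)$ on $\mathbb{R}^d_K$ together with the $L^1$-Lipschitz property of the densities in the admissibility condition (ii). Linear growth is immediate from $\|b\|_K\le C_K$. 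For $\tilde\sigma$ we apply the same argument to $\tilde\sigma^2$. To pass to the square root we use the inequality $|A^{1/2}-B^{1/2}|\le |A-B|^{1/2}$ when $\tilde\sigma^2$ may degenerate, or $|A^{1/2}-B^{1/2}|\le c|A-B|$ under uniform ellipticity. This step may require a mild nondegeneracy remark, or alternatively one can replace $\tilde\sigma$ by any Lipschitz square-root factorization of $\tilde\sigma^2$, which gives the same law.

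For the jump coefficient we need, for $|x|,|x'|\le K$,
\[
\int_{E\times[0,1]^{n1}\times[0,1]^{n2}}|\tilde\eta(s,x,i;w,u,v)-\tilde\eta(s,x',i;w,u,v)|^2\,\nu(dw)\,du\,dv\le C_K|x-x'|^2 ,
\]
together with the matching growth bound. We again split the difference into two parts. The part that varies $x$ in the first slot of $\eta$ is controlled by $\|\eta\|_K\le C_K\min(|w|,1)$ and $\int\min(|w|^2,1)\nu(dw)<\infty$. The part that varies the action arguments is controlled by Assumption~\ref{Assumption 2}(i) combined with Assumption~\ref{Assumption 2}(ii) after integrating in $(u,v)$.

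We expect this jump step to be the main obstacle. Assumption~\ref{Assumption 2}(i) gives an $L^1(\nu)$ bound rather than an $L^2(\nu)$ bound, and Assumption~\ref{Assumption 2}(ii) is an $L^1(du)$ bound. We would first try to close the estimate using the bound $|\eta|\le C_K\min(|w|,1)$, which gives $|\Delta\eta|^2\le 2C_K\min(|w|,1)|\Delta\eta|$. This converts the $L^2$ requirement into a weighted $L^1$ quantity controlled by Assumption~\ref{Assumption 2}. Failing that, we would run the Yamada--Watanabe/Gronwall uniqueness argument directly in an $L^1$ framework, estimating $\E|X_s-X'_s|$ for the compensated jump integral. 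With the Lipschitz and growth estimates in hand, Picard iteration together with a localization by the stopping times $\inf\{s:|X_s|\ge K\}$ gives a unique local strong solution. The linear growth bounds yield $\E\sup_s|X_s|^2<\infty$, so the localizing times tend to $T$ and the solution is global.
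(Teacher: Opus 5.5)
Your overall route is the paper's. You verify local Lipschitz continuity and linear growth of $\tilde b$, $\tilde\sigma$ and of the reparametrized jump coefficient $\eta(s,x,i;G_{\boldsymbol{\pi}}(s,x,i,u),M_{\boldsymbol{\gamma}}(s,x,i,v);w)$, then apply a standard existence--uniqueness theorem (the paper cites Theorem 119 of \cite{situ2005theory}), treating the Markov chain piece by piece. Your drift estimate is the paper's three-term split in compressed form.

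The gap is that the two estimates you flag are never closed, and the remedies you suggest do not close them.

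\emph{Jump coefficient.} The inequality $|\Delta\eta|^2\le 2C_K(1+K)\min(|w|,1)|\Delta\eta|$ combined with Assumption~\ref{Assumption 2} only gives
\[
\int|\Delta\eta|^2\,\nu(dw)\,du\,dv\le C\,|x-x'| .
\]
This is a square-root modulus in $L^2$, not $C|x-x'|^2$, and it cannot be improved under Assumption~\ref{Assumption 2}. Take $d=1$, $\mathcal{Y}=[0,1]$, and for $|x|\le1$ let $\boldsymbol{\pi}(\cdot\mid x)$ have density $2|x|$ on $[0,\frac12]$ and $2(1-|x|)$ on $(\frac12,1]$. This policy is admissible, and its quantile map $G_{\boldsymbol{\pi}}$ is $L^1(du)$-Lipschitz in $x$. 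With $\eta=y\,\eta_0(w)$ one gets $\int|\tilde\eta(s,x,i;w,u,v)-\tilde\eta(s,0,i;w,u,v)|^2\,\nu(dw)\,du\,dv\asymp|x|$. So neither the hypothesis of an $L^2$-Lipschitz theorem nor your Picard iteration is available.

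\emph{Diffusion coefficient.} Lipschitz continuity of $\tilde\sigma^2$ does not transfer to $\tilde\sigma$. With the same $\boldsymbol{\pi}$ and $\sigma=\phi(y)$, where $\phi$ is supported in $[0,\frac12]$, you get $\tilde\sigma(x)=c|x|^{1/2}$. Then no Lipschitz square root of $\tilde\sigma^2$ exists at all. Nondegeneracy is not among the hypotheses. Replacing $\tilde\sigma$ by another factorization preserves only the law, so it yields weak rather than strong solutions of \eqref{SDE}.

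\textbf{How the paper handles this.} Appendix~\ref{Appendix B} gets past these points only by tacitly strengthening the hypotheses. In (Te 1) it uses the pointwise bounds $|\eta_k(\cdot;y,z;w)-\eta_k(\cdot;y',z';w)|\le L_{\eta_k}(|y-y'|+|z-z'|)$ and $|G_{\boldsymbol{\pi}}(t,x,i,u)-G_{\boldsymbol{\pi}}(t,x',i,u)|\le L_G|x-x'|$ uniformly in $u$. These replace the $L^1(\nu)$ and $L^1(du)$ bounds actually stated. It then integrates a $w$-independent constant against $\nu_k$, and treats $\tilde\sigma$ by ``similarly''.

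\textbf{What you need to do.} To turn your proposal into a proof, you have two options.
\begin{enumerate}
\item Make such strengthened assumptions explicit. For example: $L^2(\nu)$-Lipschitz dependence of $\eta$ on the actions; $L^2(du)$- and $L^2(dv)$-Lipschitz dependence of $G_{\boldsymbol{\pi}}$ and $M_{\boldsymbol{\gamma}}$ on $x$; and a locally Lipschitz or uniformly nondegenerate $\tilde\sigma$. Your Picard-plus-localization argument then goes through.
\item Genuinely carry out the non-Lipschitz route. Your $L^1$ idea does give pathwise uniqueness when $\tilde\sigma$ is locally Lipschitz. Apply It\^o's formula to a smoothing of $|X-X'|$: the action part of the jump difference is controlled in $L^1(\nu\,du\,dv)$, and the state part through $|\xi+h|-|\xi|-\frac{\xi}{|\xi|}\cdot h\le 4|h|^2/|\xi|$. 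But existence must then come from weak existence plus a Yamada--Watanabe argument, not from Picard iteration.
\end{enumerate}
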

The proof can be found in Appendix \ref{Appendix B}.
\begin{remark}
For the stochastic feedback controls
\[
\begin{aligned}
\boldsymbol{\pi}(\cdot \mid t,x,i)& \sim \mathcal{N}\left(A_1(t,x,i), B_1(t,i)B_1(t,i)^\top\right),\\
\boldsymbol{\gamma}(\cdot \mid t,x,i)& \sim \mathcal{N}\left(A_2(t,x,i), B_2(t,i)B_2(t,i)^\top\right),
\end{aligned}
\]
we have the following representations using standard uniform random variables $u$ and $v$:
\[\begin{aligned}
G_{\boldsymbol{\pi}}(t,x,i,u) &= A_1(t,x,i) + B_1(t,i)\Phi^{-1}(u),\\
M_{\boldsymbol{\gamma}}(t,x,i,v) &= A_2(t,x,i) + B_2(t,i)\Phi^{-1}(v).
\end{aligned}
\]
It can be verified that if $A_1$ and $A_2$ satisfy the local Lipschitz condition with respect to $x$, then the stochastic feedback controls $\boldsymbol{\pi}$ and $\boldsymbol{\gamma}$ satisfy Assumption 2. Furthermore, if $A_1$ and $A_2$ also satisfy the linear growth condition with respect to $x$, and for any $(t,i)$ the covariance matrices $B_1(t,i)B_1(t,i)^\top$ and $B_2(t,i)B_2(t,i)^\top$ are positive definite, then these two stochastic feedback controls are admissible.
\end{remark}
Similar to the classical case, we give the definitions of the admissible strategies and value function.
\begin{definition}
(Admissible strategy). An admissible strategy $\boldsymbol{\alpha}$ $(resp., \boldsymbol{\beta}) $ for player $I$ $(resp., II )$ is a mapping $\boldsymbol{\alpha}:\mathcal{N}\rightarrow\mathcal{M} $ $(resp., \boldsymbol{\beta}: \mathcal{M}\rightarrow \mathcal{N})$. Such that $\boldsymbol{\alpha}[\boldsymbol{\gamma}]\in\mathcal{M}$ $(resp.,\boldsymbol{\beta}[\boldsymbol{\pi}]\in\mathcal{N})$, and if $\boldsymbol{\gamma}\approx\tilde{\boldsymbol{\gamma}}$ $(resp.,\boldsymbol{\pi}\approx\tilde{\boldsymbol{\pi}})$ on $[t,s]$, then $\boldsymbol{\alpha}[\boldsymbol{\gamma}]\approx\boldsymbol{\alpha}[\tilde{\boldsymbol{\gamma}}]$ $(resp.,\boldsymbol{\beta}[\boldsymbol{\pi}]\approx\boldsymbol{\beta}[\tilde{\boldsymbol{\pi}}])$ on $[t,s]$ for every $s\in[t,T]$. The set of all admissible strategies for player $I$ (resp., $II$ ) is denoted by $\Gamma(t)$  $(resp., \Delta(t))$.
\end{definition}
\begin{definition}
(Value function).
The exploratory upper value of the SDG \eqref{SDE}-\eqref{payoff function}, with initial data $(t, x, i)$, is given by
\begin{equation}\label{upper value}
\phi(t, x, i):=\sup _{\boldsymbol{\alpha} \in \Gamma(t)} \inf _{\boldsymbol{\gamma} \in \mathcal{N}(t)} J(t, x, i ; \boldsymbol{\alpha}[\boldsymbol{\gamma}], \boldsymbol{\gamma}).
\end{equation}
The exploratory lower value of the game is defined as
\begin{equation}\label{lower value}
\psi(t, x, i):=\inf _{\boldsymbol{\beta} \in \Delta(t)} \sup _{\boldsymbol{\pi} \in \mathcal{M}(t)} J(t, x, i ; \boldsymbol{\pi}, \boldsymbol{\beta}[\boldsymbol{\pi}]).
\end{equation}
\end{definition}
If $\phi(t, x, i)=\psi(t, x, i)$, then the game has a value in the sense of \cite{elliott1972existence}, and we call this common value the value of the game, denoted by $V(t, x, i)$. This means that
\[
V(t, x, i)=\phi(t, x, i)=\psi(t, x, i).
\]
The exploratory upper value HJBI equation associated with upper value \eqref{upper value} is
\begin{equation}\label{upper value HJBI}
\begin{aligned}
\phi_t(t, x, i)+\sup_{\boldsymbol{\alpha} \in \Gamma(t)} \inf _{\boldsymbol{\gamma} \in \mathcal{N}(t)}\tilde{\mathcal{H}}(t, x, i, \phi_x, \phi_{xx}; \boldsymbol{\alpha}[\boldsymbol{\gamma}], \boldsymbol{\gamma}; \phi(t, \cdot,i))=0&,\\
\quad\forall(t,x,i)\in[0, T) \times \mathbb{R}^d \times \mathcal{S}&,\\
\qquad\quad\phi(T,x,i)=g(x,i),\quad (x,i)\in \mathbb{R}^d \times \mathcal{S},\qquad\qquad\quad&
\end{aligned}
\end{equation}
and the exploratory lower value HJBI equation associated with lower value \eqref{lower value} is
\begin{equation}\label{lower value HJBI}
\begin{aligned}
\psi_t(t, x, i)+\inf _{\boldsymbol{\beta} \in \Delta(t)}\sup _{\boldsymbol{\pi} \in \mathcal{M}(t)}\tilde{\mathcal{H}}(t, x, i, \psi_x, \psi_{xx};\boldsymbol{\pi}, \boldsymbol{\beta}[\boldsymbol{\pi}]; \psi(t, \cdot,i))=0&,\\
\quad\forall(t,x,i)\in[0, T) \times \mathbb{R}^d \times \mathcal{S}&,\\
\psi(T,x,i)=g(x,i),\quad (x,i)\in \mathbb{R}^d \times \mathcal{S},\qquad\qquad\quad&
\end{aligned}
\end{equation}
where
\[
\begin{aligned}
&\tilde{\mathcal{H}}(t, x, i, p, A;\boldsymbol{\pi}
,\boldsymbol{\gamma}; \varphi(t, \cdot,i))\\
:=&\int_{\mathcal{Y}}\left[\int_{\mathcal{Z}}\mathcal{H}(t, x, i, p, A;y,z;\varphi(t,\cdot,i))\,\boldsymbol{\gamma}\,dz\right] \boldsymbol{\pi}\,dy\\
&+\sum_{j=1}^{m} q_{ij} \varphi(t, x, j)-\lambda_1\int_{\mathcal{Y}}\boldsymbol{\pi} \ln \boldsymbol{\pi}\,dy
+\lambda_2\int_{\mathcal{Z}} \boldsymbol{\gamma} \ln \boldsymbol{\gamma}\,dz,\\
&\mathcal{H}(t, x, i, p, A;y,z;\varphi(t,\cdot,i))\\
:=&r(t, x, i;y,z)+b(t, x, i; y,z) \circ p +  \frac{1}{2}\sigma^2(t, x, i; y,z) \circ A\\
&+\sum_{k=1}^{\ell} \int_{\mathbb{R}^+}\Big[\varphi(t, x+\eta_k(t, x, i; y,z; w), i)-\varphi(t, x, i)-\eta_k(t, x, i;y,z; w) \circ \varphi_x\Big] \nu_k(d w).\\
\end{aligned}
\]
It is said that the \textit{Isaacs condition} is hold if
\begin{equation}\label{Isaacs condition}
\begin{aligned}
&\sup _{\boldsymbol{\alpha} \in \Gamma(t)} \inf _{\boldsymbol{\gamma} \in \mathcal{N}(t)}\tilde{\mathcal{H}}(t, x, i, \phi_x, \phi_{xx};\boldsymbol{\alpha}[\boldsymbol{\gamma}], \boldsymbol{\gamma}; \phi(t, \cdot,i))\\
=&\inf _{\boldsymbol{\beta} \in \Delta(t)}\sup _{\boldsymbol{\pi} \in \mathcal{M}(t)}\tilde{\mathcal{H}}(t, x, i, \psi_x, \psi_{xx};\boldsymbol{\pi}, \boldsymbol{\beta}[\boldsymbol{\pi}]; \psi(t, \cdot,i)).
\end{aligned}
\end{equation}
Under the \textit{Isaacs condition}, the exploratory HJBI equations are equivalent-they share a solution, denoted $\psi$.

The goal of the game is to establish optimal feedback strategies $(\boldsymbol{\pi}^*, \boldsymbol{\gamma}^*)$ for player $I$ and player $II$ that satisfy the equilibrium. Once $(\boldsymbol{\pi}^*,\boldsymbol{\gamma}^*)$ is obtained, the value function can be determined by the following PIDE,
\[
\begin{aligned}
\psi_t(t, x, i)+\tilde{\mathcal{H}}(t, x, i, \psi_x, \psi_{xx}; \boldsymbol{\pi}^*,\boldsymbol{\gamma}^*; \psi(t, \cdot,i))=0,&\\
\forall(t,x,i)\in[0, T) \times \mathbb{R}^d \times \mathcal{S},&\\
\psi(T,x,i)=g(x,i),\quad (x,i)\in \mathbb{R}^d \times \mathcal{S}.\qquad\quad&
\end{aligned}
\]

Theorem \ref{Theorem Optimal response strategies} below provides the optimal response strategies for player $I$ and player $II$ given any arbitrary strategy of the opponent.
\begin{theorem}\label{Theorem Optimal response strategies}
(Optimal response strategies).
If the HJBI equations admit solutions $\phi(\cdot,\cdot,i), \psi(\cdot,\cdot,i) \in C^{1,2}([0, T] \times \mathbb{R}^d) \cap C([0, T] \times \mathbb{R}^d)$ for each $i \in \mathcal{S}$, respectively, then the optimal response strategies $\boldsymbol{\alpha}^*[\boldsymbol{\gamma}]$ and $\boldsymbol{\beta}^*[\boldsymbol{\pi}]$ are given by
\begin{equation*}
\begin{aligned}
\boldsymbol{\alpha}^*[\boldsymbol{\gamma}]&=\frac{\displaystyle\exp \left(\frac{1}{\lambda_1}\displaystyle\int_{\mathcal{Z}}\mathcal{H}(t,x,i,\phi_x,\phi_{xx};y,z;\phi(t,\cdot,i))\boldsymbol{\gamma}dz\right)}{\displaystyle\int_\mathcal{Y} \exp \left(\frac{1}{\lambda_1}\displaystyle\int_{\mathcal{Z}}\mathcal{H}(t,x,i,\phi_x,\phi_{xx};y,z;\phi(t,\cdot,i))\boldsymbol{\gamma} dz\right) dy}, \\
\boldsymbol{\beta}^*[\boldsymbol{\pi}]&=\frac{\displaystyle\exp \left(\frac{-1}{\lambda_2}\displaystyle\int_{\mathcal{Y}}\mathcal{H}(t,x,i,\psi_x,\psi_{xx};y,z;\psi(t,\cdot,i))\boldsymbol{\pi}dy\right)}{\displaystyle\int_\mathcal{Z} \exp \left(\frac{-1}{\lambda_2}\displaystyle\int_{\mathcal{Y}}\mathcal{H}(t,x,i,\psi_x,\psi_{xx};y,z;\psi(t,\cdot,i))\boldsymbol{\pi}dy\right) dz}.
\end{aligned}
\end{equation*}
\end{theorem}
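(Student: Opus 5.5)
The plan is to fix $(t,x,i)$ and read off the pointwise optimization embedded in the HJBI equations \eqref{upper value HJBI} and \eqref{lower value HJBI}. For the upper game, with $\boldsymbol{\gamma}\in\mathcal{P}(\mathcal{Z})$ given, player $I$'s best response is obtained by maximizing $\tilde{\mathcal{H}}(t,x,i,\phi_x,\phi_{xx};\boldsymbol{\pi},\boldsymbol{\gamma};\phi(t,\cdot,i))$ over densities $\boldsymbol{\pi}\in\mathcal{P}(\mathcal{Y})$.

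Set $h_{\boldsymbol{\gamma}}(y):=\int_{\mathcal{Z}}\mathcal{H}(t,x,i,\phi_x,\phi_{xx};y,z;\phi(t,\cdot,i))\boldsymbol{\gamma}\,dz$. The terms $\sum_j q_{ij}\phi(t,x,j)$ and $\lambda_2\int\boldsymbol{\gamma}\ln\boldsymbol{\gamma}\,dz$ do not depend on $\boldsymbol{\pi}$. The problem therefore reduces to
\[
\sup_{\boldsymbol{\pi}\in\mathcal{P}(\mathcal{Y})}\int_{\mathcal{Y}}\big(h_{\boldsymbol{\gamma}}(y)-\lambda_1\ln\boldsymbol{\pi}(y)\big)\boldsymbol{\pi}(y)\,dy .
\]
This is concave in $\boldsymbol{\pi}$ because $-\lambda_1\pi\ln\pi$ is strictly concave and $\lambda_1>0$. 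I will solve it in two ways.

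\emph{Lagrangian route.} Introduce a Lagrange multiplier $\mu$ for $\int\boldsymbol{\pi}\,dy=1$ and take the first variation $\delta$. The first-order condition is $h_{\boldsymbol{\gamma}}(y)-\lambda_1(\ln\boldsymbol{\pi}(y)+1)-\mu=0$, so $\boldsymbol{\pi}\propto\exp(h_{\boldsymbol{\gamma}}/\lambda_1)$. Normalizing gives exactly the Gibbs form $\boldsymbol{\alpha}^*[\boldsymbol{\gamma}]$ in the statement.

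\emph{Direct route (rigorous optimality).} Let $Z:=\int_{\mathcal{Y}}e^{h_{\boldsymbol{\gamma}}/\lambda_1}dy$ and $\boldsymbol{\pi}^*=e^{h_{\boldsymbol{\gamma}}/\lambda_1}/Z$. For any admissible $\boldsymbol{\pi}$,
\[
\int(h_{\boldsymbol{\gamma}}-\lambda_1\ln\boldsymbol{\pi})\boldsymbol{\pi}\,dy=\lambda_1\ln Z-\lambda_1\,\mathrm{KL}(\boldsymbol{\pi}\,\|\,\boldsymbol{\pi}^*).
\]
Since $\mathrm{KL}\ge0$, the supremum equals $\lambda_1\ln Z$ and is attained uniquely at $\boldsymbol{\pi}^*$ (a.e.). This avoids any differentiability issues in the calculus of variations.

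For player $II$, run the symmetric argument with $\boldsymbol{\pi}$ fixed and $\psi$ in place of $\phi$. Here we minimize $\int_{\mathcal{Z}}(k_{\boldsymbol{\pi}}(z)+\lambda_2\ln\boldsymbol{\gamma})\boldsymbol{\gamma}\,dz$, with $k_{\boldsymbol{\pi}}(z)=\int_{\mathcal{Y}}\mathcal{H}(\cdots;y,z;\psi)\boldsymbol{\pi}\,dy$. The same KL identity gives minimum $-\lambda_2\ln\int e^{-k_{\boldsymbol{\pi}}/\lambda_2}dz$, attained at $\boldsymbol{\beta}^*[\boldsymbol{\pi}]$.

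The main obstacle is well-definedness and admissibility rather than the optimization itself. I need to show the following.
\begin{enumerate}
\item The normalizers are finite and positive. Here I use compactness of $\mathcal{Y},\mathcal{Z}$ and continuity of $r,b,\sigma,\eta$ (Assumption \ref{Assumption 1}). Together with $\phi,\psi\in C^{1,2}$ and the L\'evy integrability, this makes $\mathcal{H}$ bounded in $(y,z)$ for fixed $(t,x,i)$, so $Z\in(0,\infty)$.
\item The resulting densities are admissible. The entropy and moment bounds follow from boundedness on compact sets. The Lipschitz condition in $x$ I would derive from local Lipschitz continuity of $\mathcal{H}$ in $x$, which requires local regularity of $\phi_x,\phi_{xx}$; I would state this as part of the standing hypotheses if needed.
\item The resulting maps are nonanticipative strategies. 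This holds because they are pointwise feedback in the current $(t,x,i)$.
\end{enumerate}
Finally, I would note that optimality in the pointwise Hamiltonian, combined with a standard verification argument (It\^o's formula for regime-switching jump-diffusions applied to $\phi$ along \eqref{SDE}), yields the best-response property for the payoff $J$.
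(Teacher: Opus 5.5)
Your proof is correct, and its Lagrangian half is exactly the paper's argument. The paper introduces a multiplier $\mu$ for $\int_{\mathcal{Y}}\boldsymbol{\pi}\,dy=1$, writes the first-variation condition $h_{\boldsymbol{\gamma}}-\lambda_1(1+\ln\boldsymbol{\pi})-\mu=0$, and stops there.

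Your second route, the identity $\int_{\mathcal{Y}}(h_{\boldsymbol{\gamma}}-\lambda_1\ln\boldsymbol{\pi})\boldsymbol{\pi}\,dy=\lambda_1\ln Z-\lambda_1\,\mathrm{KL}(\boldsymbol{\pi}\,\|\,\boldsymbol{\pi}^*)$, is a genuine strengthening, for three reasons:
\begin{itemize}
\item it shows the stationary point is the global, a.e.\ unique maximizer over all densities with finite entropy, whereas the paper never checks concavity or a second-order condition;
\item it avoids having to justify calculus of variations on $\mathcal{P}(\mathcal{Y})$;
\item it returns the optimal value $\lambda_1\ln Z$ in closed form.
\end{itemize}
Your check that $Z\in(0,\infty)$ is also absent from the paper, and it is what makes the displayed Gibbs formula meaningful. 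It rests on compactness of $\mathcal{Y},\mathcal{Z}$, $\phi\in C^{1,2}$, $|\eta|\le C_K\min(|w|,1)(1+|x|)$ and $\int_E\min(|w|^2,1)\,\nu(dw)<\infty$. Your pairing is the one consistent with the statement and with $\phi=\sup_{\boldsymbol{\alpha}}\inf_{\boldsymbol{\gamma}}$: $\phi$ and the upper HJBI for $\boldsymbol{\alpha}^*$, $\psi$ and the lower HJBI for $\boldsymbol{\beta}^*$. The paper's proof text has the two swapped.

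Two cautions apply to the parts that go beyond the paper.

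First, admissibility condition (iii) does not follow from compactness alone. Since $|\ln\boldsymbol{\pi}^*|\le|h_{\boldsymbol{\gamma}}|/\lambda_1+|\ln Z|$, you need $\sup_{y\in\mathcal{Y}}|h_{\boldsymbol{\gamma}}(y)|$ to grow at most polynomially in $x$. That requires growth hypotheses on $\phi,\phi_x,\phi_{xx}$, just like the Lipschitz condition (ii) you already flagged.

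Second, the It\^o verification with $\phi$ yields $J(t,x,i;\boldsymbol{\alpha}^*[\boldsymbol{\gamma}],\boldsymbol{\gamma})\ge\phi(t,x,i)$ for every $\boldsymbol{\gamma}$, i.e.\ optimality of $\boldsymbol{\alpha}^*$ in the upper game. It does not show that $\boldsymbol{\alpha}^*[\boldsymbol{\gamma}]$ maximizes $J(t,x,i;\cdot,\boldsymbol{\gamma})$ for an arbitrary fixed $\boldsymbol{\gamma}$. A payoff-level best response to a non-equilibrium $\boldsymbol{\gamma}$ is the Gibbs measure built from the optimal value function of player $I$'s control problem against that fixed $\boldsymbol{\gamma}$, not from $\phi$. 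That is the object the policy-improvement theorem for player $I$ iterates towards. So the final remark in your proposal should be stated in the upper-game sense only.
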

\begin{proof}
Starting from the lower value HJBI equation,  $\forall\boldsymbol{\gamma}\in \mathcal{N}(t)$, the maximizer $\boldsymbol{\alpha}^*[\boldsymbol{\gamma}]$ is
\begin{equation*}
\begin{aligned}
\boldsymbol{\alpha}^*[\boldsymbol{\gamma}]\in\arg\sup_{\boldsymbol{\pi} \in \mathcal{M}(t)}\left\{- \lambda_1 \int_{\mathcal{Y}} \boldsymbol{\pi} \ln \boldsymbol{\pi}\,dy\int_{\mathcal{Y}} \left[ \int_{\mathcal{Z}} \mathcal{H}(t, x, i, \psi_x, \psi_{xx}; y, z; \psi(t, \cdot,i)) \boldsymbol{\gamma}\,dz \right] \boldsymbol{\pi}\,dy \right\}
\end{aligned}
\end{equation*}
subject to the constraints that  $\boldsymbol{\pi}$ is a probability distribution, namely
$\int_{\mathcal{Y}} \boldsymbol{\pi}\,dy = 1.$
Introducing the Lagrange multiplier $\mu$  to this problem immediately yields
\begin{equation*}
\mathcal{L}(\boldsymbol{\pi},\mu)= \int_{\mathcal{Y}} \bigg\{ \left[\int_{\mathcal{Z}} \mathcal{H}(t, x, i, \psi_x, \psi_{xx}; y, z; \psi(t, \cdot,i)) \boldsymbol{\gamma}\,dz\right] \boldsymbol{\pi}\\
- \lambda_1 \boldsymbol{\pi}\ln \boldsymbol{\pi}\bigg\} dy
-\mu \left( \int_{\mathcal{Y}} \boldsymbol{\pi}\,dy - 1 \right).
\end{equation*}
The stationary condition is
\begin{equation*}
\left\{\begin{array}{ll}
\delta \mathcal{L}&={\displaystyle\int}_{\mathcal{Y}} \bigg\{\left[ {\displaystyle\int}_{\mathcal{Z}} \mathcal{H}(t, x, i, \psi_x, \psi_{xx}; y, z; \psi(t, \cdot,i)) \boldsymbol{\gamma}\,dz \right]- \lambda_1 (1 + \ln \boldsymbol{\pi}) - \mu \bigg\}\delta\boldsymbol{\pi} dy=0,\\
0&={\displaystyle\int}_{\mathcal{Y}} \boldsymbol{\pi}\,dy - 1.
\end{array} \right.
\end{equation*}
Consequently, the maximizer $\boldsymbol{\alpha}^*[\boldsymbol{\gamma}]$ can be obtained by solving the above equation. Similarly, starting from the upper value HJBI equation, the minimizer $\boldsymbol{\beta}^*[\boldsymbol{\pi}]$ can also be derived.
\end{proof}

\section{Actor-Critic RL algorithm design}\label{Section 4}
Deriving analytical solutions for the HJBI equations in a regime-switching jump-diffusion ZSSDG is generally intractable due to the non-linear PIDEs and the coupling induced by the Markov chain generator $Q = (q_{ij})_{m \times m}$. Motivated by \cite{zhou2021actor}, \cite{jia2022policy} ,\cite{hao2022entropy}, and \cite{wang2020continuous}, we propose a Continuous-Time Actor-Critic RL algorithm to approximate the value function and the optimal strategies.

The algorithm alternates between Policy Evaluation (PE) and Policy Improvement (PI). In the PE step, the critic estimates the value function of the current policies via the Episodic Online Continuous-Time Temporal Difference (CTD) method, which minimizes the martingale residual. In the PI step, the actors update their policies by minimizing the Kullback-Leibler (KL) divergence between their parameterized distributions and the optimal Gibbs response distributions characterized in Theorems \ref{Policy improvement for player $I$} and \ref{Policy improvement for player $II$}.
\begin{theorem}\label{Policy improvement for player $I$}
(Policy improvement for player $I$)
Let $\boldsymbol{\gamma}$ be fixed and $\boldsymbol{\pi}$ be an arbitrarily given admissible strategy. Suppose that the corresponding value function $V^{\boldsymbol{\pi}}(\cdot,\cdot,i) \in C^{1,2}([0,T) \times \mathbb{R}^d) \cap C^{0}([0,T] \times \mathbb{R}^d)$ for each $i \in \mathcal{S}$, for any $(t,x,i)\in[0,T) \times \mathbb{R}^d \times \mathcal{S}$, the strategy $\tilde{\boldsymbol{\pi}}$ defined by
\begin{equation}\label{Policy improvement pi}
\scalebox{1.15}{$
\tilde{\boldsymbol{\pi}}=\frac{\displaystyle\exp \left(\frac{1}{\lambda_1}\displaystyle\int_{\mathcal{Z}}\mathcal{H}(t,x,i,V^{\boldsymbol{\pi}}_x,V^{\boldsymbol{\pi}}_{xx};y,z;V^{\boldsymbol{\pi}}(t,\cdot,i))\boldsymbol{\gamma}\, dz\right)}{\displaystyle\int_\mathcal{Y} \exp \left(\frac{1}{\lambda_1}\displaystyle\int_{\mathcal{Z}}\mathcal{H}(t,x,i,V^{\boldsymbol{\pi}}_x,V^{\boldsymbol{\pi}}_{xx};y,z;V^{\boldsymbol{\pi}}(t,\cdot,i))\boldsymbol{\gamma} \,dz\right)dy} $}\end{equation}
is admissible. Then, the value function under specific strategies $\tilde{\boldsymbol{\pi}}$ and $\boldsymbol{\pi}$ satisfies
\begin{equation}V^{\tilde{\boldsymbol{\pi}}}(t,x,i)\geq V^{\boldsymbol{\pi}}(t,x,i),\,(t,x,i)\in[0,T]\times\mathbb{R}^d \times \mathcal{S}.
\end{equation}
\end{theorem}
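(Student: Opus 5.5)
The plan is the standard policy-improvement argument from \cite{wang2020continuous,jia2022policy}: the Gibbs policy maximizes the entropy-regularized Hamiltonian pointwise, and Itô's formula converts this pointwise gain into an inequality between value functions. Fix $\boldsymbol{\gamma}$ and write $V^{\boldsymbol{\pi}}(t,x,i)=J(t,x,i;\boldsymbol{\pi},\boldsymbol{\gamma})$.

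\textbf{Step 1 (PIDE for $V^{\boldsymbol{\pi}}$).} By the Feynman--Kac representation for the exploratory regime-switching jump-diffusion \eqref{SDE}, $V^{\boldsymbol{\pi}}$ solves the linear coupled PIDE
\[
V^{\boldsymbol{\pi}}_t(t,x,i)+\tilde{\mathcal{H}}\big(t,x,i,V^{\boldsymbol{\pi}}_x,V^{\boldsymbol{\pi}}_{xx};\boldsymbol{\pi},\boldsymbol{\gamma};V^{\boldsymbol{\pi}}(t,\cdot,i)\big)=0,\qquad V^{\boldsymbol{\pi}}(T,x,i)=g(x,i).
\]
This uses the assumed $C^{1,2}$ regularity together with the fact, explained in Subsection~\ref{sub22}, that the generator of \eqref{SDE} is the $\boldsymbol{\pi}\otimes\boldsymbol{\gamma}$-average of the classical generators.

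\textbf{Step 2 (pointwise optimality of $\tilde{\boldsymbol{\pi}}$).} Write $h(y)=\int_{\mathcal{Z}}\mathcal{H}(t,x,i,V^{\boldsymbol{\pi}}_x,V^{\boldsymbol{\pi}}_{xx};y,z;V^{\boldsymbol{\pi}}(t,\cdot,i))\boldsymbol{\gamma}\,dz$. For any density $\boldsymbol{\pi}'$ on $\mathcal{Y}$,
\[
\int_{\mathcal{Y}} h\,\boldsymbol{\pi}'\,dy-\lambda_1\int_{\mathcal{Y}}\boldsymbol{\pi}'\ln\boldsymbol{\pi}'\,dy=\lambda_1\ln\!\int_{\mathcal{Y}} e^{h/\lambda_1}dy-\lambda_1\mathrm{KL}(\boldsymbol{\pi}'\,\|\,\tilde{\boldsymbol{\pi}}).
\]
The right-hand side is maximized exactly at $\boldsymbol{\pi}'=\tilde{\boldsymbol{\pi}}$, which is the Gibbs formula \eqref{Policy improvement pi}; this is the same computation as in Theorem~\ref{Theorem Optimal response strategies}. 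The terms $\sum_j q_{ij}V^{\boldsymbol{\pi}}(t,x,j)$ and $\lambda_2\int\boldsymbol{\gamma}\ln\boldsymbol{\gamma}$ do not depend on the choice of $\boldsymbol{\pi}'$. Hence
\[
\tilde{\mathcal{H}}(\cdots;\tilde{\boldsymbol{\pi}},\boldsymbol{\gamma};\cdot)\ge\tilde{\mathcal{H}}(\cdots;\boldsymbol{\pi},\boldsymbol{\gamma};\cdot),
\]
and combining with Step 1 gives
\[
V^{\boldsymbol{\pi}}_t+\tilde{\mathcal{H}}\big(t,x,i,V^{\boldsymbol{\pi}}_x,V^{\boldsymbol{\pi}}_{xx};\tilde{\boldsymbol{\pi}},\boldsymbol{\gamma};V^{\boldsymbol{\pi}}(t,\cdot,i)\big)\ge0 .
\]

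\textbf{Step 3 (Itô/Dynkin along the improved dynamics).} Let $\tilde X$ solve \eqref{SDE} under $(\tilde{\boldsymbol{\pi}},\boldsymbol{\gamma})$ from $(t,x,i)$; it exists uniquely by Proposition~\ref{Proposition 1}, since $\tilde{\boldsymbol{\pi}}$ is assumed admissible. Apply Itô's formula for regime-switching jump-diffusions to $V^{\boldsymbol{\pi}}(s,\tilde X_s,\theta_s)$ on $[t,\tau_n]$, where $\tau_n=\inf\{s\ge t:|\tilde X_s|\ge n\}\wedge T$. The drift term equals $V^{\boldsymbol{\pi}}_t$ plus the $(\tilde{\boldsymbol{\pi}},\boldsymbol{\gamma})$-averaged generator, including the regime term $\sum_j q_{\theta_s j}V^{\boldsymbol{\pi}}(s,\tilde X_s,j)$. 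The remaining terms are local martingales: the $dW$ integral, the $\tilde N'$ integral, and the martingale associated with the jumps of $\theta$. These have zero expectation up to $\tau_n$. Adding the running reward and the entropy terms, the inequality from Step 2 yields
\[
V^{\boldsymbol{\pi}}(t,x,i)\le\E_{t,x,i}\Big[\int_t^{\tau_n}\big(\tilde r-\lambda_1\textstyle\int\tilde{\boldsymbol{\pi}}\ln\tilde{\boldsymbol{\pi}}+\lambda_2\int\boldsymbol{\gamma}\ln\boldsymbol{\gamma}\big)ds+V^{\boldsymbol{\pi}}(\tau_n,\tilde X_{\tau_n},\theta_{\tau_n})\Big].
\]

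\textbf{Step 4 (remove localization).} Let $n\to\infty$. The integrand has polynomial growth, by Assumption~\ref{Assumption 1}(iv) and admissibility conditions (iii)--(iv). The moments $\E\sup_s|\tilde X_s|^p<\infty$ hold by the standard estimates behind Proposition~\ref{Proposition 1}. Dominated convergence then applies, together with continuity of $V^{\boldsymbol{\pi}}$ up to $T$ and $V^{\boldsymbol{\pi}}(T,\cdot)=g$. This gives $V^{\boldsymbol{\pi}}(t,x,i)\le J(t,x,i;\tilde{\boldsymbol{\pi}},\boldsymbol{\gamma})=V^{\tilde{\boldsymbol{\pi}}}(t,x,i)$. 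At $t=T$ both sides equal $g$.

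\textbf{Main obstacle.} I expect the hardest step to be Step 4. Passing to the limit requires that $V^{\boldsymbol{\pi}}$ itself has polynomial growth, so that $\{V^{\boldsymbol{\pi}}(\tau_n,\tilde X_{\tau_n},\theta_{\tau_n})\}$ is uniformly integrable, and that the local martingale parts, especially the compensated jump integral involving $V^{\boldsymbol{\pi}}(x+\eta)-V^{\boldsymbol{\pi}}(x)$, are handled correctly. The first point follows from the growth bounds on $r$, $g$ and the entropy, via the representation $V^{\boldsymbol{\pi}}=J$. The second is handled by the stopping times together with the integrability condition $\int\min(|w|^2,1)\nu(dw)<\infty$ and the bound on $\eta$ in Assumption~\ref{Assumption 1}(ii).
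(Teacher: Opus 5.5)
Your proof is correct and shares the paper's skeleton: It\^{o}--L\'{e}vy formula with regime switching, a pointwise Hamiltonian inequality, localization, then $n\to\infty$. It differs from the written proof at the decisive step, and yours is the version whose chain of inequalities actually closes.

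The paper expands $V^{\boldsymbol{\pi}}(s,X^{\boldsymbol{\pi}}_s,\theta_s)$ along the \emph{old} dynamics $(\boldsymbol{\pi},\boldsymbol{\gamma})$. It then invokes $\tilde{\mathcal{H}}(\cdot;\boldsymbol{\pi},\boldsymbol{\gamma};V^{\boldsymbol{\pi}})\le\tilde{\mathcal{H}}(\cdot;\tilde{\boldsymbol{\pi}},\boldsymbol{\gamma};V^{\tilde{\boldsymbol{\pi}}})$, with $V^{\tilde{\boldsymbol{\pi}}}$ on the right. Finally it writes the running reward along $X^{\tilde{\boldsymbol{\pi}}}$ while keeping the terminal term along $X^{\boldsymbol{\pi}}$. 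The Gibbs formula only compares policies with $V^{\boldsymbol{\pi}}$ held fixed, so neither that inequality nor the change of state process is justified as written.

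You instead run It\^{o} along the improved dynamics $(\tilde{\boldsymbol{\pi}},\boldsymbol{\gamma})$. You use exactly what the Gibbs construction delivers, namely $V^{\boldsymbol{\pi}}_t+\tilde{\mathcal{H}}(\cdot;\tilde{\boldsymbol{\pi}},\boldsymbol{\gamma};V^{\boldsymbol{\pi}})\ge 0$, which is the standard argument of \cite{wang2020continuous,jia2022policy}.

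Two further points favour your version. Your KL identity gives genuine global maximality of $\tilde{\boldsymbol{\pi}}$, whereas the Lagrange-multiplier computation in Theorem~\ref{Theorem Optimal response strategies} only yields a stationarity condition. Localizing by $|\tilde X_s|\ge n$ also makes the compensated-Poisson and Markov-chain integrals true martingales. The paper's $\tau_n$, defined only through $\int|V^{\boldsymbol{\pi}}_x\circ\tilde\sigma|^2\,dh$, controls the Brownian integral alone.

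One small repair is needed. Since $V^{\boldsymbol{\pi}}$ is only assumed $C^{1,2}$ on $[0,T)$, stop at $\tau_n\wedge(T-1/n)$ rather than $\tau_n\wedge T$, so that the derivative bounds hold on the stochastic interval. The limit then uses continuity of $V^{\boldsymbol{\pi}}$ on $[0,T]\times\mathbb{R}^d$, the facts $\theta_{T-1/n}\to\theta_T$ and $\tilde X_{T-}=\tilde X_T$ a.s., and the polynomial growth of $V^{\boldsymbol{\pi}}$ that you already identify.
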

\begin{proof}
Fix $\boldsymbol{\gamma}$ and $(t, x, i)\in [0,T]\times\mathbb{R}^d \times \mathcal{S}$. Let $X_h^{\boldsymbol{\pi}}$ be the state for $(\boldsymbol{\pi}, \boldsymbol{\gamma})$, $X_h^{\tilde{\boldsymbol{\pi}}}$ be the state for $(\tilde{\boldsymbol{\pi}}, \boldsymbol{\gamma})$, and define
\begin{equation*}
\begin{aligned}
\tilde{\mathcal{J}}(t, x, i;\boldsymbol{\pi}
,\boldsymbol{\gamma};\phi(t, \cdot,i))
:=& \int_{\mathcal{Y}}\Bigg[\int_{\mathcal{Z}}\sum_{k=1}^{\ell}\int_{\mathbb{R}^+}\Big[\phi(t, x+\eta_k(t, x, i;y, z; w), i)\\
&-\phi( t, x, i)-\eta_k(t, x, i; y, z; w) \circ\phi_x\Big] \nu_k(dw)\boldsymbol{\gamma}\,dz\Bigg] \boldsymbol{\pi}\,dy.
\end{aligned}
\end{equation*}
By applying It\^{o}-L\'{e}vy's formula with regime switching, we have
\begin{equation*}
\begin{aligned}
&V^{\boldsymbol{\pi}}(s,X_{s}^{\boldsymbol{\pi}}, \theta_s)-V^{\boldsymbol{\pi}}(t,x, i)\\
=&\int^s_tV_x^{\boldsymbol{\pi}}(h, X_{h-}^{\boldsymbol{\pi}}, \theta_{h-})\circ\tilde{\sigma}(h,X_{h-}^{\boldsymbol{\pi}}, \theta_{h-};\boldsymbol{\pi}_h,\boldsymbol{\gamma}_h)dW_h+\int_t^s dM^{\theta}_h \\
&+\int_t^s\tilde{\mathcal{J}}(h,X_{h-}^{\boldsymbol{\pi}}, \theta_{h-};\boldsymbol{\pi}_h,\boldsymbol{\gamma}_h;V^{\boldsymbol{\pi}}(h,\cdot, \cdot)) dh+\int^s_{t}\Big[\tilde{b}(h,X_{h-}^{\boldsymbol{\pi}}, \theta_{h-};\boldsymbol{\pi}_h,\boldsymbol{\gamma}_h) \circ \\
&\quad V_x^{\boldsymbol{\pi}}(h,X_{h-}^{\boldsymbol{\pi}}, \theta_{h-})+\frac{1}{2} \tilde{\sigma}^2(h,X_{h-}^{\boldsymbol{\pi}}, \theta_{h-};\boldsymbol{\pi}_h,\boldsymbol{\gamma}_h) \circ V_{xx}^{\boldsymbol{\pi}}(h,X_{h-}^{\boldsymbol{\pi}}, \theta_{h-})\\
&\quad+V_h^{\boldsymbol{\pi}}(h,X_{h-}^{\boldsymbol{\pi}}, \theta_{h-})+\sum_{j=1}^m q_{\theta_{h-}, j} V^{\boldsymbol{\pi}}(h, X_{h-}^{\boldsymbol{\pi}}, j)\Big]dh\\
&+\sum_{k=1}^\ell\int_t^s\int_{\mathbb{R^+}\times[0,1]^{n1}\times[0,1]^{n2}}\Big[V^{\boldsymbol{\pi}}(h,X_{h-}^{\boldsymbol{\pi}}+\eta_k(h,X_{h-}^{\boldsymbol{\pi}}, \theta_{h-}; G_{\boldsymbol{\pi}_h}(u), M_{\boldsymbol{\gamma}_{h}}(v); w), \theta_{h-})\\
&\qquad-V^{\boldsymbol{\pi}}(h,X_{h-}^{\boldsymbol{\pi}}, \theta_{h-})\Big]\tilde{N}_k^{\prime}(dh,dw,du,dv),
\end{aligned}
\end{equation*}
where $M^{\theta}_h$ is the zero-mean local martingale associated with the jumps of the continuous-time Markov chain $\theta$.
Define the stopping times
$\tau_n := \inf\{s \geq t : \int_t^s |V_x^{\boldsymbol{\pi}}(h, X_{h-}^{\boldsymbol{\pi}}, \theta_{h-})\circ \tilde{\sigma}(h,X_{h-}^{\boldsymbol{\pi}}, \theta_{h-};\boldsymbol{\pi}_h,\boldsymbol{\gamma}_h)|^2dh \geq n \}$, for $n \geq 1$. Then, by taking the conditional expectation, the martingale terms vanish, and we obtain
\begin{equation}\label{Ja}
\begin{aligned}	
&V^{\boldsymbol{\pi}}(s\wedge\tau_n,X_{s\wedge\tau_n}^{\boldsymbol{\pi}}, \theta_{s\wedge\tau_n})-V^{\boldsymbol{\pi}}(t,x,i)\\
=&\E_{t,x,i}\left[\int_t^{s\wedge\tau_n}\tilde{\mathcal{J}}(h,X_{h-}^{\boldsymbol{\pi}}, \theta_{h-};\boldsymbol{\pi}_h,\boldsymbol{\gamma}_h;V^{\boldsymbol{\pi}}(h,\cdot,\cdot)) dh\right.\\
&+\int^{s\wedge\tau_n}_{t}\Big(\tilde{b}(h,X_{h-}^{\boldsymbol{\pi}}, \theta_{h-};\boldsymbol{\pi}_h,\boldsymbol{\gamma}_h) \circ V_x^{\boldsymbol{\pi}}(h, X_{h-}^{\boldsymbol{\pi}}, \theta_{h-})\\
&\quad+\frac{1}{2} \tilde{\sigma}^2(h,X_{h-}^{\boldsymbol{\pi}}, \theta_{h-};\boldsymbol{\pi}_h,\boldsymbol{\gamma}_h) \circ V_{xx}^{\boldsymbol{\pi}}(h, X_{h-}^{\boldsymbol{\pi}}, \theta_{h-})\\
&\quad\left.+V_t^{\boldsymbol{\pi}}(h, X_{h-}^{\boldsymbol{\pi}}, \theta_{h-})+\sum_{j=1}^m q_{\theta_{h-}, j} V^{\boldsymbol{\pi}}(h, X_{h-}^{\boldsymbol{\pi}}, j)\Big)dh\right].
\end{aligned}
\end{equation}
On the other hand $\forall(t,x,i)\in[0, T) \times \mathbb{R}^d \times \mathcal{S}$, we have
\begin{equation}\label{Ha}
\begin{aligned}
-V^{\boldsymbol{\pi}}_t(t,x,i)=&\tilde{\mathcal{H}}(t, x, i, V^{\boldsymbol{\pi}}_x, V^{\boldsymbol{\pi}}_{xx};\boldsymbol{\pi}, \boldsymbol{\gamma}; V^{\boldsymbol{\pi}}(t, \cdot,i))\\
\leq&\tilde{\mathcal{H}}(t, x, i, V^{\tilde{\boldsymbol{\pi}}}_x, V^{\tilde{\boldsymbol{\pi}}}_{xx};\tilde{\boldsymbol{\pi}}, \boldsymbol{\gamma}; V^{\tilde{\boldsymbol{\pi}}}(t, \cdot,i)).
\end{aligned}
\end{equation}
Substituting \eqref{Ha} into \eqref{Ja} yields
$$\begin{aligned}
V^{\boldsymbol{\pi}}(t,x,i)
\leq& \E_{t,x,i}\bigg[\int_t^{s\wedge\tau_n} \Big(\tilde{r}(h,X_{h-}^{\tilde{\boldsymbol{\pi}}}, \theta_{h-};\tilde{\boldsymbol{\pi}}_h,\boldsymbol{\gamma}_h)-\int_{\mathcal{Y}} \tilde{\boldsymbol{\pi}}_h \ln \tilde{\boldsymbol{\pi}}_h\, dy\\
&+\int_{\mathcal{Z}} \boldsymbol{\gamma}_h \ln\boldsymbol{\gamma}_h \,dz\Big)d h+V^{\boldsymbol{\pi}}(s\wedge\tau_n,X^{\boldsymbol{\pi}}_{s\wedge\tau_n}, \theta_{s\wedge\tau_n})\bigg].
\end{aligned}$$
Set $s = T$ and let $n \to \infty$. Then, combining with the condition $V^{\boldsymbol{\pi}}(T, X_T^{\boldsymbol{\pi}}, \theta_T)= g(X_T^{\boldsymbol{\pi}}, \theta_T)$, we derive that $\forall(t,x,i) \in [0,T] \times \mathbb{R}^d \times \mathcal{S}$,
$$\begin{aligned}
V^{\boldsymbol{\pi}}(t,x,i)
\leq& \E_{t,x,i}\bigg[\int_t^{T} \Big(\tilde{r}(h,X_{h-}^{\tilde{\boldsymbol{\pi}}}, \theta_{h-};\tilde{\boldsymbol{\pi}}_h,\boldsymbol{\gamma}_h)-\int_{\mathcal{Y}} \tilde{\boldsymbol{\pi}}_h \ln \tilde{\boldsymbol{\pi}}_h\, dy\\
&+\int_{\mathcal{Z}} \boldsymbol{\gamma}_h \ln\boldsymbol{\gamma}_h \,dz\Big)d h+ g(X_T^{\tilde{\boldsymbol{\pi}}}, \theta_T)\bigg]=V^{\tilde{\boldsymbol{\pi}}}(t,x,i).
\end{aligned}$$
\end{proof}
\begin{theorem}\label{Policy improvement for player $II$}
(Policy improvement for player $II$).
Let $\boldsymbol{\pi}$ be fixed and $\boldsymbol{\gamma}$ be an arbitrarily given admissible strategy. Suppose that the corresponding value function $V^{\boldsymbol{\gamma}}(\cdot,\cdot,i) \in C^{1,2}([0,T) \times \mathbb{R}^d) \cap C^{0}([0,T] \times \mathbb{R}^d)$ for each $i \in \mathcal{S}$, for any $(t,x,i)\in[0,T) \times \mathbb{R}^d \times \mathcal{S}$, the strategy $\tilde{\boldsymbol{\gamma}}$ defined by
\begin{equation}\label{Policy improvement gamma}
\scalebox{1.15}{$\tilde{\boldsymbol{\gamma}}=\frac{\displaystyle\exp \left(\frac{-1}{\lambda_2}\displaystyle\int_{\mathcal{Y}}\mathcal{H}(t,x,i,V^{\boldsymbol{\gamma}}_x,V^{\boldsymbol{\gamma}}_{xx};y,z;V^{\boldsymbol{\gamma}}(t,\cdot,\cdot))	 {\boldsymbol{\pi}}\,dy\right)}{\displaystyle\int_\mathcal{Z} \exp \left(\frac{-1}{\lambda_2}\displaystyle\int_{\mathcal{Y}}\mathcal{H}(t,x,i,V^{\boldsymbol{\gamma}}_x,V^{\boldsymbol{\gamma}}_{xx};y,z;V^{\boldsymbol{\gamma}}(t,\cdot,\cdot)){\boldsymbol{\pi}} \,dy\right) dz}
$}\end{equation}
is admissible. Then, the value function under specific strategies $\tilde{\boldsymbol{\gamma}}$ and $\boldsymbol{\gamma}$ satisfies
\begin{equation}
V^{\tilde{\boldsymbol{\gamma}}}(t,x,i)\leq V^{\boldsymbol{\gamma}}(t,x,i),\,(t,x,i)\in[0,T]\times\mathbb{R}^d \times \mathcal{S}.
\end{equation}
\end{theorem}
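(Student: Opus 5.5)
The proof mirrors that of Theorem \ref{Policy improvement for player $I$}, with the inequality reversed because player $II$ minimizes. Fix $\boldsymbol{\pi}$ and $(t,x,i)$, and write $V^{\boldsymbol{\gamma}}$ for the payoff under $(\boldsymbol{\pi},\boldsymbol{\gamma})$. The first step is to use that $V^{\boldsymbol{\gamma}}$ solves the linear Feynman--Kac PIDE
\[
V^{\boldsymbol{\gamma}}_t+\tilde{\mathcal{H}}(t,x,i,V^{\boldsymbol{\gamma}}_x,V^{\boldsymbol{\gamma}}_{xx};\boldsymbol{\pi},\boldsymbol{\gamma};V^{\boldsymbol{\gamma}}(t,\cdot,\cdot))=0,\qquad V^{\boldsymbol{\gamma}}(T,\cdot,\cdot)=g.
\]
The key pointwise observation concerns the $\boldsymbol{\gamma}$-dependent part of $\tilde{\mathcal{H}}$ with the function $V^{\boldsymbol{\gamma}}$ held fixed. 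Set $h(z):=\int_{\mathcal{Y}}\mathcal{H}(\cdots;y,z;V^{\boldsymbol{\gamma}})\boldsymbol{\pi}\,dy$. Then the relevant functional is $F(\boldsymbol{\gamma}')=\int_{\mathcal{Z}} h\,\boldsymbol{\gamma}'\,dz+\lambda_2\int_{\mathcal{Z}}\boldsymbol{\gamma}'\ln\boldsymbol{\gamma}'\,dz$. This functional is strictly convex in $\boldsymbol{\gamma}'$. By the Gibbs variational principle (equivalently, nonnegativity of the KL divergence), we have
\[
F(\boldsymbol{\gamma}')=\lambda_2\,\mathrm{KL}(\boldsymbol{\gamma}'\,\|\,\tilde{\boldsymbol{\gamma}})-\lambda_2\ln\int_{\mathcal{Z}}e^{-h/\lambda_2}dz .
\]
Hence $F$ is minimized exactly at $\tilde{\boldsymbol{\gamma}}$ given by \eqref{Policy improvement gamma}. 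The Lagrangian computation in the proof of Theorem \ref{Theorem Optimal response strategies} gives the same conclusion. It follows that for every $(s,x',j)$,
\[
V^{\boldsymbol{\gamma}}_t+\tilde{\mathcal{H}}(s,x',j,V^{\boldsymbol{\gamma}}_x,V^{\boldsymbol{\gamma}}_{xx};\boldsymbol{\pi},\tilde{\boldsymbol{\gamma}};V^{\boldsymbol{\gamma}})\le 0 .
\]
The regime-coupling term $\sum_j q_{ij}V^{\boldsymbol{\gamma}}(t,x,j)$ does not depend on the policy, so it causes no difficulty. Note that, unlike in \eqref{Ha}, I keep $V^{\boldsymbol{\gamma}}$ (not $V^{\tilde{\boldsymbol{\gamma}}}$) inside $\tilde{\mathcal{H}}$; this is the correct comparison to make.

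Next, let $X^{\tilde{\boldsymbol{\gamma}}}$ be the exploratory state process \eqref{SDE} driven by $(\boldsymbol{\pi},\tilde{\boldsymbol{\gamma}})$. Its existence and uniqueness follow from Proposition \ref{Proposition 1}, since $\tilde{\boldsymbol{\gamma}}$ is assumed admissible. I would apply the It\^o--L\'evy formula with regime switching to $V^{\boldsymbol{\gamma}}(s,X^{\tilde{\boldsymbol{\gamma}}}_s,\theta_s)$, exactly as in the previous proof. This produces a Brownian stochastic integral, the compensated $\tilde N'$-integral, and the Markov chain martingale $M^\theta$. The drift of this expansion is $V^{\boldsymbol{\gamma}}_t$ plus the generator of the $(\boldsymbol{\pi},\tilde{\boldsymbol{\gamma}})$-averaged process. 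Because the jump part is randomized through $G_{\boldsymbol{\pi}}$ and $M_{\tilde{\boldsymbol{\gamma}}}$, this generator coincides with the action-average of $\mathcal{H}-r$, as explained in Subsection \ref{sub22}. Localizing with stopping times $\tau_n$ and taking expectations, the drift inequality above yields
\[
V^{\boldsymbol{\gamma}}(t,x,i)\ge \E_{t,x,i}\Big[\int_t^{T\wedge\tau_n}\Big(\tilde r-\lambda_1\!\int\!\boldsymbol{\pi}\ln\boldsymbol{\pi}+\lambda_2\!\int\!\tilde{\boldsymbol{\gamma}}\ln\tilde{\boldsymbol{\gamma}}\Big)dh+V^{\boldsymbol{\gamma}}(T\wedge\tau_n,X^{\tilde{\boldsymbol{\gamma}}}_{T\wedge\tau_n},\theta_{T\wedge\tau_n})\Big].
\]
All coefficients in this inequality are evaluated along $X^{\tilde{\boldsymbol{\gamma}}}$.

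The final step, which I expect to be the main technical obstacle, is letting $n\to\infty$ and identifying the limit with $V^{\tilde{\boldsymbol{\gamma}}}(t,x,i)$ via $V^{\boldsymbol{\gamma}}(T,\cdot,\cdot)=g$. For the localization I would take $\tau_n$ to be the first exit time of $X^{\tilde{\boldsymbol{\gamma}}}$ from $\mathbb{R}^d_n$, capped at $T$. This makes all three local martingales true martingales, since on compacts $V^{\boldsymbol{\gamma}}_x$ is bounded and Assumption \ref{Assumption 1}(ii)--(iii) controls the jump integrand. To pass to the limit, I would use polynomial moment bounds $\E\sup_s|X^{\tilde{\boldsymbol{\gamma}}}_s|^p<\infty$, obtained from the linear growth in Assumption \ref{Assumption 1} together with BDG and Gronwall. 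These are combined with the polynomial growth of $r$ and $g$ (Assumption \ref{Assumption 1}(iv)) and of the entropy terms (admissibility condition (iii)). I would also impose, if necessary, polynomial growth of $V^{\boldsymbol{\gamma}}$, so that dominated convergence applies to both the running integral and the terminal term. The right-hand side then equals $J(t,x,i;\boldsymbol{\pi},\tilde{\boldsymbol{\gamma}})=V^{\tilde{\boldsymbol{\gamma}}}(t,x,i)$, which proves $V^{\tilde{\boldsymbol{\gamma}}}\le V^{\boldsymbol{\gamma}}$ on $[0,T)$. At $t=T$ both sides equal $g$, so the inequality holds trivially there.
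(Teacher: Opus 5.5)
Your proposal is correct and follows the paper's route. The paper proves this theorem only by pointing to the player-$I$ argument: the It\^o--L\'evy formula with regime switching, localization, the pointwise Hamiltonian inequality, and passage to the limit using $V^{\boldsymbol{\gamma}}(T,\cdot,\cdot)=g$. Your version is in fact the rigorous form of that template: applying It\^o along $X^{\tilde{\boldsymbol{\gamma}}}$ and keeping $V^{\boldsymbol{\gamma}}$ fixed inside $\tilde{\mathcal{H}}$ is exactly what the comparison \eqref{Ha} should say. The one detail to add is to cap $\tau_n$ also at $T-1/n$, since $V^{\boldsymbol{\gamma}}$ is only $C^{1,2}$ on $[0,T)$, and then use continuity up to $T$ when passing to the limit.
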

\begin{proof}
The proof is similar to that of Theorem \ref{Policy improvement for player $I$} and is omitted here.
\end{proof}
\subsection{Parameterization of value function and policies}
In general continuous state-action spaces, \eqref{Policy improvement pi} and \eqref{Policy improvement gamma} involve numerous integrals that are computationally prohibitive. Thus, we introduce parameterized function approximators (e.g., neural networks or polynomial basis expansions).

Let $\psi$ denote the parameters of the Critic network, $V^{\psi}(t, x, i)$, which approximates the value function for each regime $i \in \mathcal{S}$.
Let $\phi_1$ and $\phi_2$ denote the parameters of the Actor networks for player $I$ and player $II$, representing the stochastic policies $\boldsymbol{\pi}^{\phi_1}(y | t, x, i)$ and $\boldsymbol{\gamma}^{\phi_2}(z | t, x, i)$, respectively.

\subsection{Discrete-time interaction and regime-switching}
Learning is achieved through episodic interactions with the environment. We adopt a discrete-time interaction scheme over a fixed time horizon $T$. Choose a small step size $\Delta t > 0$ and define the grid $t_k = k\Delta t$ for $k = 0, 1, \dots, K-1$.

At time $t_k$, given the current state $X_{t_k}$ and regime $\theta_{t_k} = i$, the players sample actions $y_k \sim \boldsymbol{\pi}^{\phi_1}(\cdot | t_k, X_{t_k}, i)$ and $z_k \sim \boldsymbol{\gamma}^{\phi_2}(\cdot | t_k, X_{t_k}, i)$. The regime state $\theta_t$ transitions according to the generator matrix $Q = (q_{ij})_{m \times m}$. We simulate the regime transition over $\Delta t$ by drawing from the transition probabilities $P(\theta_{t_{k+1}}=j | \theta_{t_k}=i) \approx \delta_{ij} + q_{ij}\Delta t$, where $\delta_{ij}$ is the Kronecker delta satisfying $\delta_{ij}=1$ for $i=j$ and $\delta_{ij}=0$ otherwise.

The continuous state is then advanced using the Euler-Maruyama discretization for the jump-diffusion process:
\begin{equation}\label{Euler-Maruyama}
\begin{aligned}
X_{t_{k+1}} &= X_{t_k} + b(t_k, X_{t_k}, \theta_{t_k}; y_k, z_k)\Delta t+ \sigma(t_k, X_{t_k}, \theta_{t_k}; y_k, z_k)\Delta W_k \\
&\quad + \sum_{l=1}^{\ell} \eta_l(t_k, X_{t_k}, \theta_{t_k}; y_k, z_k; w_k) \Delta \tilde{N}_{l,k},
\end{aligned}
\end{equation}
where $\Delta W_k$ is the $k$-dimensional Brownian increment. For each $l \in \{1, \dots, \ell\}$, $\eta_l$ denotes the $l$-th column vector of the matrix function $\eta$, and $\Delta \tilde{N}_{l,k}$ is the compensated Poisson jump increment of the $l$-th component. The one-step realized reward is $r_k = r(t_k, X_{t_k}, \theta_{t_k}; y_k, z_k)$.

\subsection{Policy Evaluation (Critic Update)}
Under the exploratory policies $\boldsymbol{\pi}^{\phi_1}$ and $\boldsymbol{\gamma}^{\phi_2}$, the process
\[
M_t^{\psi}: = V^{\psi}(t, X_t, \theta_t) + \int_0^t \hat{\mathcal{H}}_{\lambda}^{\phi_1, \phi_2}(s, X_s, \theta_s) \, ds
\]
must be a martingale. Where the entropy-regularized running reward is:
\begin{equation}
\hat{\mathcal{H}}_{\lambda}^{\phi_1, \phi_2}(t_k, X_{t_k}, \theta_{t_k}) = r_k - \lambda_1 \ln \boldsymbol{\pi}^{\phi_1}(y_k | t_k, X_{t_k}, \theta_{t_k})+ \lambda_2 \ln \boldsymbol{\gamma}^{\phi_2}(z_k | t_k, X_{t_k}, \theta_{t_k}).
\end{equation}
Following the continuous-time TD approach, the one-step martingale residual (TD error) at time $t_k$ for non-terminal steps ($k < K-1$) is given by:
\begin{equation}\label{TD_error}
\Delta_k^\psi =V^\psi(t_{k+1}, X_{t_{k+1}}, \theta_{t_{k+1}}) - V^\psi(t_k, X_{t_k}, \theta_{t_k})+ \hat{\mathcal{H}}_{\lambda}^{\phi_1, \phi_2}(t_k, X_{t_k}, \theta_{t_k}) \Delta t.
\end{equation}
Crucially, to enforce the terminal boundary condition of the HJBI equation, the true terminal payoff $g(\cdot, \cdot)$ must be incorporated at the final step ($k = K-1$). The TD error transitions directly to the known terminal reward:
\begin{equation}\label{TD_error_terminal}
\Delta_{K-1}^\psi = g(X_T, \theta_T) - V^\psi(t_{K-1}, X_{t_{K-1}}, \theta_{t_{K-1}})+ \hat{\mathcal{H}}_{\lambda}^{\phi_1, \phi_2}(t_{K-1}, X_{t_{K-1}}, \theta_{t_{K-1}}) \Delta t.
\end{equation}
The Critic parameters $\psi$ are updated to minimize the mean squared TD error. Additionally, a boundary loss term can be added to explicitly regularize the network towards the terminal condition. The parameter update rule is:
\begin{align}
\psi& \leftarrow \psi + \eta_\psi \cdot\Delta_k^\psi \cdot\nabla_\psi V^\psi(t_k, X_{t_k}, \theta_{t_k}),\\
\psi& \leftarrow \psi - \eta_\psi \cdot\nabla_\psi \left(V^\psi(T, X_T, \theta_T) - g(X_T, \theta_T)\right)^2,
\end{align}
where $\eta_\psi > 0$ is the critic learning rate.

\subsection{Policy Improvement (Actor Updates)}
The policy improvement theorems \ref{Policy improvement for player $I$} and \ref{Policy improvement for player $II$} indicate that the optimal policies should align with the Gibbs distributions proportional to $\exp(\frac{1}{\lambda_1}\mathcal{H}^\psi)$ and $\exp(-\frac{1}{\lambda_2}\mathcal{H}^\psi)$. Minimizing the KL divergence between the parameterized actors and these target distributions is equivalent to maximizing (for player $I$) and minimizing (for player $II$) the expected Hamiltonian augmented by the entropy.

Let $\mathcal{H}_k^\psi$ be the empirical Hamiltonian evaluated at the sampled transition using the Critic network $V^\psi$:
\begin{equation*}
\mathcal{H}_k^\psi= r_k + b_k \circ V_x^\psi + \frac{1}{2}\sigma_k^2 \circ V_{xx}^\psi  + \sum_{l=1}^{\ell} \int\Big[ V^\psi(t_k, X_{t_k} + \eta_l, \theta_{t_k}) - V^\psi - \eta_l \circ V_x^\psi \Big] \nu_l(dw).
\end{equation*}
Minimizing the KL divergence between the parameterized actor distributions and the target Gibbs distributions is mathematically equivalent to optimizing the following objectives:
\begin{enumerate}
\item For player $I$: Minimizing $\mathbb{E} [  \lambda_1 \ln \boldsymbol{\pi}^{\phi_1}(y_k) -\mathcal{H}_k^\psi ]$;
\item For player $II$: Minimizing $\mathbb{E} [\lambda_2 \ln \boldsymbol{\gamma}^{\phi_2}(z_k) + \mathcal{H}_k^\psi]$.
\end{enumerate}
The Actor parameters $\phi_1$ and $\phi_2$ are updated via stochastic gradient descent (SGD):
\begin{align}
\phi_1 \leftarrow \phi_1 - \eta_{\phi_1} \cdot\nabla_{\phi_1} (\lambda_1 \ln \boldsymbol{\pi}^{\phi_1} - \mathcal{H}_k^\psi),\\
\phi_2 \leftarrow \phi_2 - \eta_{\phi_2}\cdot \nabla_{\phi_2} (\lambda_2 \ln \boldsymbol{\gamma}^{\phi_2} + \mathcal{H}_k^\psi),
\end{align}
where $\eta_{\phi_1},\eta_{\phi_2}>0$ are actor learning rates.

These updates push player $I$'s policy to assign higher probabilities to actions $y$ that yield higher Hamiltonian values, and player $II$ to actions $z$ that yield lower Hamiltonian values, scaled by their respective temperature parameters $\lambda_1$ and $\lambda_2$.

\subsection{Algorithm Pseudocode}
The complete Actor-Critic learning procedure for the regime-switching jump-diffusion ZSSDG is summarized in Algorithm \ref{Algorithm 1}.

\begin{algorithm}[h!]
\caption{Actor-Critic RL for Regime-Switching Jump-Diffusion ZSSDG}
\textbf{Input:} function $b$, $\sigma$, $\eta$; reward function $r$, $g$; Markov chain generator matrix $Q=(q_{ij})$, L\'{e}vy measures $\nu_l$; initial state $X_0$, initial regime $\theta_0 \in \mathcal{S}$, time horizon $T$, time step $\Delta t$, temperature parameters $\lambda_1, \lambda_2$, learning rates $\eta_\theta, \eta_{\phi_1}, \eta_{\phi_2}$, number of episodes $N_{ep}$. \\
\textbf{Initialization:} Initialize Critic network parameters $\psi$, and Actor network parameters $\phi_1, \phi_2$.\label{Algorithm 1}
\begin{algorithmic}[1]
\For{episode $n = 1$ to $N_{ep}$}
    \State Set $t_0 = 0, X_{t_0} = X_0, \theta_{t_0} = \theta_0$.
    \For{step $k = 0$ to $K-1$}
        \State \textbf{Action Sampling:}
        \State Sample $y_k \sim \boldsymbol{\pi}^{\phi_1}(\cdot | t_k, X_{t_k}, \theta_{t_k})$.
        \State Sample $z_k \sim \boldsymbol{\gamma}^{\phi_2}(\cdot | t_k, X_{t_k}, \theta_{t_k})$.
        \State \textbf{Environment Transition:}
        \State Determine $\theta_{t_{k+1}}$ using transition probabilities $P(j|i) \approx \delta_{ij} + q_{ij}\Delta t$.
         \State Advance state to $X_{t_{k+1}}$ using the Euler-Maruyama jump-diffusion update via Eq. \eqref{Euler-Maruyama}.
        \State \textbf{Policy Evaluation (Critic Update):}
        \State Compute $\mathcal{H}_k^\psi$ and  $\hat{\mathcal{H}}_{\lambda}^{\phi_1, \phi_2}$.
        \If{$k == K-1$}
            \State Apply terminal boundary regularization\\ $\psi \leftarrow \psi - \eta_\psi \cdot\nabla_\psi \left(V^\psi(T, X_T, \theta_T) - g(X_T, \theta_T)\right)^2$.
        \EndIf
        \State Compute TD error $\Delta_k^\psi$ via Eq. \eqref{TD_error}.
        \State Update Critic parameters: \\
        $\psi \leftarrow \psi + \eta_\psi\cdot \Delta_k^\psi \cdot\nabla_\psi V^\psi(t_k, X_{t_k}, \theta_{t_k})$.
        \State \textbf{Policy Improvement (Actor Update):}
        \State Update player $I$: \\
        $\phi_1 \leftarrow \phi_1 - \eta_{\phi_1} \cdot\nabla_{\phi_1} (\lambda_1 \ln \boldsymbol{\pi}^{\phi_1} - \mathcal{H}_k^\psi)$.
        \State Update player $II$:\\
        $\phi_2 \leftarrow \phi_2 - \eta_{\phi_2}\cdot \nabla_{\phi_2} (\lambda_2 \ln \boldsymbol{\gamma}^{\phi_2} + \mathcal{H}_k^\psi)$.
    \EndFor
\EndFor
\end{algorithmic}
\textbf{Output:} Learned value function $V^{\psi^*}$ and optimal policies $(\boldsymbol{\pi}^{\phi_1^*}, \boldsymbol{\gamma}^{\phi_2^*})$.
\end{algorithm}

\section{Applications to linear-quadratic ZSSDG}\label{Section 3}
The exploratory linear-quadratic problem was first proposed by \cite{wang2020reinforcement}, who proved that the optimal exploration strategy follows a Gaussian distribution in a pure diffusion setting. Subsequently, \cite{sun2023reinforcement} studied the two-player linear-quadratic ZSSDG within an exploratory framework. In this section, we consider the exploratory two-player linear-quadratic ZSSDG based on a regime-switching jump-diffusion model. We first review the classical regime-switching linear-quadratic ZSSDG by drawing on the model established by \cite{moon2021linear}. For the simplicity of exposition, we consider a one-dimensional continuous state case, and the conclusions can be extended to multi-dimensional cases.
\subsection{Model and solution}
Consider the following controlled regime-switching SDE
\[
\left\{\begin{array}{ll}
dX_s=(A(\theta_s)X_s+B_1(\theta_s)y_s+B_2(\theta_s)z_s)ds\\
\qquad\quad+(C(\theta_s)X_s+D_1(\theta_s)y_s+D_2(\theta_s)z_s)dW_s\\
\qquad\quad+{\displaystyle\int}_{\mathbb{R^+}}\Big(E(\theta_{s-}, w)X_{s-}+F_1(\theta_{s-}, w)y_s+F_2(\theta_{s-}, w)z_s\Big)\tilde{N}(d s, d w),\\
X_t=x, \quad \theta_t=i, \quad (x\in \mathbb{R}, i \in \mathcal{S}).
\end{array} \right.
\]
The payoff function is
\begin{equation}
J^{cl}(t, x, i ; y, z)
=\E_{t, x, i}\Bigg[\int_t^T\Big(Q_1(\theta_s)X_s^2 +R_1(\theta_s)y_s^2+R_2(\theta_s)z_s^2 \Big)ds+Q_2(\theta_T)X_T^2\Bigg].
\end{equation}
Player $I$ and player $II$ maximize and minimize the payoff function $J^{cl}(t, x, i ; y, z)$ by controlling $y$ and $z$, respectively.

Now we introduce the exploratory ZSSDG framework, and the exploratory SDE is
\[
\left\{\begin{array}{ll}
dX _s=\Big(A(\theta_s)X _s+B_1(\theta_s){\displaystyle\int}_{\mathcal{Y}}y\,\boldsymbol{\pi}_s dy+B_2(\theta_s){\displaystyle\int}_{\mathcal{Z}}z\,\boldsymbol{\gamma}_s dz\Big)ds\\
\qquad\quad+\Big(C(\theta_s)X _s+D_1(\theta_s){\displaystyle\int}_{\mathcal{Y}}y\,\boldsymbol{\pi}_s dy+D_2(\theta_s){\displaystyle\int}_{\mathcal{Z}}z\,\boldsymbol{\gamma}_s dz\Big)dW_s\\
\qquad\quad+{\displaystyle\int}_{\mathbb{R^+}\times[0,1]\times[0,1]}\Big(E(\theta_{s-}, w)X _{s-}
+F_1(\theta_{s-}, w)G_{\boldsymbol{\pi}_s}(u)\\
\qquad\qquad\qquad\qquad\qquad+F_2(\theta_{s-}, w)M_{\boldsymbol{\gamma}_s}(v)\Big)
\tilde{N}^\prime(d s, d w,du,dv),\\
X_t=x, \quad \theta_t=i, \quad(x\in \mathbb{R}, i \in \mathcal{S}),
\end{array} \right.
\]
where $\mathcal{Y}=[-M_y,M_y]\subseteq\mathbb{R}$ and $\mathcal{Z}=[-M_z,M_z]\subseteq\mathbb{R}$, with $M_y,M_z>0$ chosen sufficiently large and held fixed; the exploratory payoff function is given by
\[
\begin{aligned}
J\left(t, x, i; {\boldsymbol{\pi}}, {\boldsymbol{\gamma}}\right)
=&\E_{t, x, i}\Bigg\{\int_t^T\Bigg[Q_1(\theta_s)(X _s)^2 +R_1(\theta_s)\int_{\mathcal{Y}}y^2\,\boldsymbol{\pi}_s dy+R_2(\theta_s)\int_{\mathcal{Z}}z^2\,\boldsymbol{\gamma}_s dz\\
&-\lambda_1\int_{\mathcal{Y}} \boldsymbol{\pi}_s\ln \boldsymbol{\pi}_s \,dy+\lambda_2\int_{\mathcal{Z}} \boldsymbol{\gamma}_s\ln \boldsymbol{\gamma}_s \,dz\Bigg]ds+Q_2(\theta_T)(X _T)^2\Bigg\}.
\end{aligned}
\]
Player $I$ and player $II$ maximize and minimize the payoff function $J(t, x, i ;\boldsymbol{\pi}, \boldsymbol{\gamma})$ by controlling $\boldsymbol{\pi}$ and $\boldsymbol{\gamma}$, respectively.

We suppose that the exploratory value function takes the following form
\[
V(t, x, i) = \frac{1}{2} P(t, i) x^2 + S(t, i).
\]
For the convenience of the subsequent discussion, we introduce several notations:
\begin{equation*}\begin{aligned}
a(t, i):=&R_1(i) + \frac{1}{2} P(t, i) \left( D_1(i)^2 + \int_{\mathbb{R}^+} F_1(i, w)^2 \nu(dw) \right),\\
b(t, i):=&P(t, i) \left( B_1(i) + C(i) D_1(i) + \int_{\mathbb{R}^+} E(i, w) F_1(i, w) \nu(dw) \right), \\
c(t, i):=&P(t, i) \left( D_1(i) D_2(i) + \int_{\mathbb{R}^+} F_1(i, w) F_2(i, w) \nu(dw) \right),\\
d(t, i):=&R_2(i) + \frac{1}{2} P(t, i) \left( D_2(i)^2 + \int_{\mathbb{R}^+} F_2(i, w)^2 \nu(dw) \right),\\
e(t, i):=&P(t, i) \left( B_2(i) + C(i) D_2(i) + \int_{\mathbb{R}^+} E(i, w) F_2(i, w) \nu(dw) \right).
\end{aligned}
\end{equation*}
\begin{assumption}\label{Assumption 3} We assume that $a(t, i)<0$, $d(t, i)>0$ for all $t \in [0,T]$ and $i \in \mathcal{S}$.
\end{assumption}
\begin{theorem}\label{Theorem 3}
Under Assumption \ref{Assumption 3}, the Isaacs condition holds, and the optimal strategies are given by
\[\left\{\begin{array}{ll}
\boldsymbol{\pi}^*(y|t,x,i)=\mathcal{N}\left( {\displaystyle\frac{c(t, i) e(t, i) - 2b(t, i) d(t, i)}{4a(t, i)d(t, i) - c^2(t, i)}}x, {\displaystyle\frac{-\lambda_1}{2a(t, i)}} \right), \\	
\boldsymbol{\gamma}^*(z|t,x,i)=\mathcal{N}\left( {\displaystyle\frac{b(t, i) c(t, i) - 2a(t, i) e(t, i)}{4a(t, i)d(t, i) - c^2(t, i)}}x, {\displaystyle\frac{\lambda_2}{2d(t, i)}} \right),
\end{array}\right.
\]
where $P(t, i)$ and $S(t, i)$ are determined by the following coupled system of ordinary differential equations (ODE):
\[\left\{\begin{array}{ll}
{\displaystyle\frac{1}{2}}P_t(s, i)=- {\displaystyle\frac{P(s, i)}{2}}\left( C(i)^2 + {\displaystyle\int}_{\mathbb{R}^+} E(i, w)^2 \nu(dw) \right) \\
\qquad\qquad\quad+{\displaystyle\frac{a(s, i)e(s, i)^2 + b(s, i)^2d(s, i) - b(s, i)c(s, i)e(s, i)}{4a(s, i)d(s, i) - c^2(s, i)}}\\
\qquad\qquad\quad-Q_1(i) - P(s, i)A(i) - {\displaystyle\frac{1}{2}}\sum_{j=1}^m q_{ij} P(s, j), \\
S_t(s, i)={\displaystyle\frac{-\lambda_2 + \lambda_1}{2}} - {\displaystyle\frac{\lambda_1}{2}}\ln\left( -{\displaystyle\frac{\pi e\lambda_1}{a(s, i)}} \right) \\
\qquad\qquad\quad+ {\displaystyle\frac{\lambda_2}{2}}\ln\left( {\displaystyle\frac{\pi e\lambda_2}{d(s, i)}} \right) - \sum_{j=1}^m q_{ij} S(s, j), \\
P(T, i)=2Q_2(i),\quad S(T, i)=0. \\
\end{array}\right.
\]
\end{theorem}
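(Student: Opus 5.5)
We substitute the quadratic ansatz $V(t,x,i)=\frac12 P(t,i)x^2+S(t,i)$ into the exploratory HJBI equation and identify the Gibbs form of the optimal responses from Theorem \ref{Theorem Optimal response strategies}.

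\textbf{Step 1: the Hamiltonian for fixed actions.} With $V_x=P x$ and $V_{xx}=P$, and since $V$ is quadratic in $x$, the jump integrand is exact:
\[
V(t,x+\eta,i)-V(t,x,i)-\eta V_x=\tfrac12 P\eta^2 .
\]
Hence for fixed $(y,z)$ the Hamiltonian $\mathcal H$ is a quadratic polynomial in $(y,z)$, with coefficients depending on $x$. Expanding $(Cx+D_1y+D_2z)^2$ and $\int(Ex+F_1y+F_2z)^2\nu(dw)$ and collecting terms gives
\[
\mathcal H = a\,y^2+d\,z^2+c\,yz+b\,xy+e\,xz+\Big(Q_1+PA+\tfrac P2\big(C^2+\textstyle\int E^2\nu\big)\Big)x^2 .
\]
This is exactly why the abbreviations $a,b,c,d,e$ were introduced.

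\textbf{Step 2: Gaussian best responses.} Averaging over $\boldsymbol\gamma$ with mean $\mu_z$, Theorem \ref{Theorem Optimal response strategies} gives
\[
\boldsymbol\alpha^*[\boldsymbol\gamma]\propto\exp\big(\lambda_1^{-1}(a y^2+(c\mu_z+bx)y)\big).
\]
Because $a<0$ (Assumption \ref{Assumption 3}), this is Gaussian with variance $-\lambda_1/(2a)$ and mean $-(c\mu_z+bx)/(2a)$. Symmetrically, $d>0$ gives $\boldsymbol\beta^*[\boldsymbol\pi]$ Gaussian with variance $\lambda_2/(2d)$ and mean $-(c\mu_y+ex)/(2d)$.

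There is a subtlety: $\mathcal Y,\mathcal Z$ are compact intervals, so strictly these are truncated Gaussians. Following the paper's convention that $M_y,M_z$ are large, I would treat them as full Gaussians, or remark that truncation is neglected.

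\textbf{Step 3: the fixed point and the Isaacs condition.} Solving the linear system for the means gives
\[
\mu_y=\frac{ce-2bd}{4ad-c^2}\,x,\qquad \mu_z=\frac{bc-2ae}{4ad-c^2}\,x,
\]
since $4ad-c^2<0$ under the assumption and so is nonzero.

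For the Isaacs condition, note that the entropy-regularized functional $\tilde{\mathcal H}$ is concave in $\boldsymbol\pi$ and convex in $\boldsymbol\gamma$. It is bilinear in the pair of measures, and $-\lambda_1\int\boldsymbol\pi\ln\boldsymbol\pi$ and $\lambda_2\int\boldsymbol\gamma\ln\boldsymbol\gamma$ are strictly concave and strictly convex respectively. So the mutual best-response pair is a saddle point of $\tilde{\mathcal H}$. A saddle point means sup-inf equals inf-sup, which is the Isaacs condition \eqref{Isaacs condition}. Alternatively one can verify this directly: plug the Gaussian forms in, and the reduced function of $(\mu_y,\mu_z)$ is $a\mu_y^2+d\mu_z^2+c\mu_y\mu_z+\dots$, which is concave-convex with a unique critical point.

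\textbf{Step 4: the ODE system.} Evaluate $\tilde{\mathcal H}$ at the equilibrium. Use $\int y^2\boldsymbol\pi^*=\mu_y^2-\lambda_1/(2a)$ and the Gaussian entropy $\frac12\ln(2\pi e\,\mathrm{var})$, and add the coupling term $\sum_j q_{ij}\big(\tfrac12 P(t,j)x^2+S(t,j)\big)$.

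Then separate the $x^2$ and constant coefficients in $V_t+\tilde{\mathcal H}=0$. The $x^2$ part gives the Riccati-type equation for $P$. The key simplification is that the optimized value of $a\mu_y^2+d\mu_z^2+c\mu_y\mu_z+b\mu_y x+e\mu_z x$ equals
\[
-\frac{ae^2+b^2d-bce}{4ad-c^2}\,x^2,
\]
which I would check by direct substitution. The constant part gives the equation for $S$, with the terms $\lambda_1/2-\lambda_2/2$ and the logarithms arising from combining variance and entropy contributions. The terminal conditions $P(T,i)=2Q_2(i)$ and $S(T,i)=0$ come from $g=Q_2x^2$.

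I expect the main obstacle to be this bookkeeping. The Gaussian entropy constants must match the stated form $\ln(-\pi e\lambda_1/a)$, since $2\pi e\cdot(-\lambda_1/(2a))=-\pi e\lambda_1/a$. The sign conventions in the stated $\frac12 P_t$ equation must also be checked. Finally, well-posedness requires that the coupled Riccati system has a solution on $[0,T]$ keeping $a<0<d$, which is the content of Assumption \ref{Assumption 3}.
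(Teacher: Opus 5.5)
Your proposal is correct. Its skeleton matches the paper's: the quadratic ansatz, the exact jump term $\tfrac12 P\eta^2$, Gaussian Gibbs responses with variances $-\lambda_1/(2a)$ and $\lambda_2/(2d)$, and matching of the $x^2$ and constant coefficients. The real difference is how you obtain the Isaacs condition.

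The paper argues sequentially. It computes $\boldsymbol{\alpha}^*[\boldsymbol{\gamma}]$, substitutes it into the outer infimum, and minimizes over $\boldsymbol{\gamma}$ to get $\boldsymbol{\gamma}^*$. It then asserts that the inf-sup order yields the same pair. That middle minimization is not a plain Gibbs computation: after substitution the objective contains $-\frac{c^2}{4a}\bigl(\int z\,\boldsymbol{\gamma}\,dz\bigr)^2$, which is nonlinear in $\boldsymbol{\gamma}$. The reverse order is only claimed ``with similar discussion''.

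You instead solve the $2\times 2$ mutual best-response system for the means. You then use that $\tilde{\mathcal H}$ is bilinear in $(\boldsymbol{\pi},\boldsymbol{\gamma})$ plus a strictly concave entropy in $\boldsymbol{\pi}$ and a strictly convex one in $\boldsymbol{\gamma}$. So the pair is a saddle point and the minimax equality is immediate. This route is shorter and supplies the justification the paper leaves implicit.

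To make it airtight, add one line. Because $\boldsymbol{\alpha}[\boldsymbol{\gamma}]$ may depend on $\boldsymbol{\gamma}$, using the Gibbs response gives $\sup_{\boldsymbol{\alpha}}\inf_{\boldsymbol{\gamma}}\tilde{\mathcal H}(\boldsymbol{\alpha}[\boldsymbol{\gamma}],\boldsymbol{\gamma})=\inf_{\boldsymbol{\gamma}}\sup_{\boldsymbol{\pi}}\tilde{\mathcal H}(\boldsymbol{\pi},\boldsymbol{\gamma})$, and symmetrically for the lower Hamiltonian. Hence the paper's Isaacs condition is exactly the minimax equality you prove.

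The bookkeeping you were worried about does close, with the signs as stated.
\begin{itemize}
\item At the stationary point the quadratic form equals $\tfrac12 x\,(b\mu_y+e\mu_z)=-\frac{ae^2+b^2d-bce}{4ad-c^2}\,x^2$.
\item The variance contributions give $a\cdot\bigl(-\lambda_1/(2a)\bigr)+d\cdot\lambda_2/(2d)=(\lambda_2-\lambda_1)/2$. Together with the Gaussian entropies, this is precisely the constant in the $S$ equation.
\end{itemize}

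Your truncation caveat is apt. The paper's proof makes the same full-Gaussian simplification tacitly; it even writes $\int_{\mathbb R} z\boldsymbol{\gamma}\,dz$. Strictly, the linear-in-$x$ means require unbounded action sets.
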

\begin{proof}
Starting from the lower value HJBI equation,  $\forall\boldsymbol{\gamma}\in \mathcal{N}(t)$, it follows from Theorem \ref{Theorem Optimal response strategies} that the maximizer $\boldsymbol{\alpha}^*[\boldsymbol{\gamma}]$ is
$$\boldsymbol{\alpha}^*[\boldsymbol{\gamma}]=\frac{\displaystyle\exp \left(\frac{1}{\lambda_1}\displaystyle\int_{\mathcal{Z}}\mathcal{H}(t,x,i,\psi_x,\psi_{xx};y,z;\psi(t,\cdot, i))\boldsymbol{\gamma}\,dz\right)}{\displaystyle\int_\mathcal{Y} \exp\left( \frac{1}{\lambda_1}\displaystyle\int_{\mathcal{Z}}\mathcal{H}(t,x,i,\psi_x,\psi_{xx};y,z;\psi(t,\cdot, i))\boldsymbol{\gamma}\, dz\right) dy}.$$
where $\psi(t, x, i) = \frac{1}{2} P(t, i) x^2 + S(t, i)$, and
$$\begin{aligned}
&\mathcal{H}(t, x, i, \psi_x, \psi_{xx};y,z; \psi(t, \cdot, i))\\
=&Q_1(i) x^2+ R_1(i) y^2 + R_2(i) z^2+ \psi_x\left( A(i) x + B_1(i) y + B_2(i) z \right)\\
&+\frac{1}{2} \psi_{xx}\left( C(i) x + D_1(i) y + D_2(i) z \right)^2\\
&+\int_{\mathbb{R}^+}\Big[ \psi(t, x + E(i,w)x + F_1(i,w)y + F_2(i,w)z, i) \\
&\,\,\,- \psi(t, x, i)-\psi_x (E(i,w)x + F_1(i,w)y + F_2(i,w)z) \Big]\nu(d w).
\end{aligned}$$
Simplifying yields
$$\boldsymbol{\alpha}^*[\boldsymbol{\gamma}]=\mathcal{N}\left(-\frac{b(t, i)x+c(t, i)\int_{\mathbb{R}} z \boldsymbol{\gamma} dz}{2a(t, i)}, \frac{-\lambda_1}{2a(t, i)}\right).$$
According to Assumption \ref{Assumption 3}, $a(t, i) < 0$, which indicates that $\boldsymbol{\alpha}^*[\boldsymbol{\gamma}]$ exists.
Substitute $\boldsymbol{\alpha}^*[\boldsymbol{\gamma}]$ into formula
$$\begin{aligned}
\boldsymbol{\gamma}^*&\in\arg\inf_{\boldsymbol{\gamma}\in \mathcal{N}(t)}\left\{\lambda_2 \int_{\mathcal{Z}} \boldsymbol{\gamma} \ln \boldsymbol{\gamma}\,dz-\lambda_1 \int_{\mathcal{Y}}\boldsymbol{\alpha}^*[\boldsymbol{\gamma}]\ln \boldsymbol{\alpha}^*[\boldsymbol{\gamma}]dy\right.\\
&\quad\left.+\int_{\mathcal{Z}} \left[ \int_{\mathcal{Y}} \mathcal{H} \left(t, x, i, \psi_x, \psi_{xx}; y, z; \psi(t, \cdot, \cdot)\right)\boldsymbol{\alpha}^*[\boldsymbol{\gamma}]dy\right] \boldsymbol{\gamma}dz\right\}.
\end{aligned}$$
With similar discussion, the minimizer $\boldsymbol{\gamma}^*$ is
$$\boldsymbol{\gamma}^*=\mathcal{N}\left( \frac{b(t, i) c(t, i) - 2a(t, i) e(t, i)}{4a(t, i)d(t, i) - c^2(t, i)}x, \frac{\lambda_2}{2d(t, i)} \right).$$
According to Assumption \ref{Assumption 3}, $d(t, i)> 0$, which indicates that $\boldsymbol{\gamma}^*$ exists.

So for
\[\sup _{\boldsymbol{\alpha} \in \Gamma(t)} \inf _{\boldsymbol{\gamma} \in \mathcal{N}(t)}\tilde{\mathcal{H}}(t, x, i, \psi_x, \psi_{xx};\boldsymbol{\alpha}[\boldsymbol{\gamma}], \boldsymbol{\gamma}; \psi(t, \cdot, i)),\]
we have
$$\left\{\begin{array}{l}
\boldsymbol{\alpha}^*[\boldsymbol{\gamma}^*]=\mathcal{N}\left( {\displaystyle\frac{c(t, i) e(t, i) - 2b(t, i) d(t, i)}{4a(t, i)d(t, i) - c^2(t, i)}}x, {\displaystyle\frac{-\lambda_1}{2a(t, i)}} \right), \\	
\boldsymbol{\gamma}^*=\mathcal{N}\left( {\displaystyle\frac{b(t, i) c(t, i) - 2a(t, i) e(t, i)}{4a(t, i)d(t, i) - c^2(t, i)}}x, {\displaystyle\frac{\lambda_2}{2d(t, i)}} \right).
\end{array}\right.$$
Using a similar derivation for the inf-sup case, we can obtain $\boldsymbol{\alpha}^*[\boldsymbol{\gamma}^*]=\boldsymbol{\pi}^*$, $\boldsymbol{\gamma}^*=\boldsymbol{\beta}^*[\boldsymbol{\pi}^*]$.
Therefore, the \textit{Isaacs condition} \eqref{Isaacs condition} holds and the optimal strategies are obtained.

The current problem is to solve the following PIDE,
$$\begin{aligned}
\tilde{\mathcal{H}}(t, x, i, \psi_x, \psi_{xx};\boldsymbol{\pi}^*, \boldsymbol{\gamma}^*; \psi(t, \cdot, i))+\psi_t(t, x, i)=0,&\\
\forall(t,x,i)\in[0, T) \times \mathbb{R} \times \mathcal{S},&\\
\psi(T,x, i)=Q_2(i) x^2.\qquad\qquad\qquad&
\end{aligned}$$
Substituting $\psi(t, x, i) = \frac{1}{2} P(t, i) x^2 + S(t, i)$ and the optimal strategies into the PIDE, we have
\[\begin{aligned}
&\psi_t+\tilde{\mathcal{H}}(t, x, i, \psi_x, \psi_{xx};\boldsymbol{\pi}^*, \boldsymbol{\gamma}^*; \psi(t, \cdot, \cdot))\\
=&\frac{1}{2} P_t(t, i) x^2 + S_t(t, i)+\bigg[ Q_1(i) + \frac{1}{2}\sum_{j=1}^m q_{ij} P(t, j)\\
&+ P(t, i)A(i)+ \frac{P(t, i)}{2}\left( C(i)^2 + \int_{\mathbb{R}^+} E(i, w)^2 \nu(dw) \right)\\
&\left.- \frac{a(t, i)e(t, i)^2 + b(t, i)^2d(t, i) - b(t, i)c(t, i)e(t, i)}{4a(t, i)d(t, i) - c(t, i)^2} \right] x^2 \\
&+ \frac{\lambda_2 - \lambda_1}{2} + \frac{\lambda_1}{2}\ln\left( -\frac{\pi e\lambda_1}{a(t, i)} \right) - \frac{\lambda_2}{2}\ln\left( \frac{\pi e\lambda_2}{d(t, i)} \right)+ \sum_{j=1}^m q_{ij} S(t, j)=0,\\
\psi&(T, x, i) = \frac{1}{2} P(T, i)  X_T^2 + S(T, i)=Q_2(i) X_T^2.
\end{aligned}
\]
Matching the coefficients of $x^2$ and the constant terms, we conclude that the coupled ODE system holds.
\end{proof}
\subsection{Numerical example}
In general, it is difficult to obtain an analytical solution for the coupled ODE system derived in Theorem \ref{Theorem 3}. Owing to its semi-analytical nature, we numerically solve this system of ODEs using the Runge-Kutta-Fehlberg method (RK45). The system parameters are provided in Table \ref{Table 1}, with the error tolerance set to $10^{-6}$. Fig. \ref{fig:P(t)andS(t)} contains two subplots: the left shows the numerical solutions $P(t, i)$ for each regime $i \in \mathcal{S}$, and the right shows $S(t, i)$. Over $t\in[0,10]$, $P(t, i)$ is strictly decreasing, falling smoothly from its initial value to the terminal value $2Q_2(i)$ and exhibiting overall concavity. By contrast, $S(t, i)$ increases linearly over time for each regime.
Fig. \ref{fig:Optimal strategies} presents the optimal feedback strategies, showing that the strategy variance is independent of the continuous state $x$, although it depends on the regime.
\begin{table}[h!]
\caption{System parames}
\label{Table 1}
\centering
\begin{tabular}{@{}llll@{}}
\hline
\textbf{Params} & \textbf{Value} & \textbf{Params} & \textbf{Value} \\
\hline
$A(1)$   & $-0.5$ & $A(2)$   & $-0.6$ \\
$C(1)$   & $0.2$  & $C(2)$   & $0.25$ \\
$Q_1(1)$ & $1.0$  & $Q_1(2)$ & $1.2$  \\
$Q_2(1)$ & $0.5$  & $Q_2(2)$ & $0.4$  \\
$R_1(1)$ & $-3.0$ & $R_1(2)$ & $-3.2$ \\
$R_2(1)$ & $1.5$  & $R_2(2)$ & $1.8$  \\
$D_1(1)$ & $0.05$ & $D_1(2)$ & $0.06$ \\
$D_2(1)$ & $0.05$ & $D_2(2)$ & $0.06$ \\
$B_1(1)$ & $0.5$  & $B_1(2)$ & $0.55$ \\
$B_2(1)$ & $0.3$  & $B_2(2)$ & $0.35$ \\
$q_{11}$ & $-0.5$ & $q_{12}$ & $0.5$  \\
$q_{21}$ & $0.3$  & $q_{22}$ & $-0.3$ \\
$\lambda_1$ & $1$ & $\lambda_2$ & $2$ \\
$e$ & $2.718$ & $T$ & $10.0$ \\
$E(w, 1)$ & $\exp\{-0.5w\}$ & $E(w, 2)$ & $\exp\{-0.6w\}$ \\
$F_1(w, 1)$ & $w\exp\{-w\}$ & $F_1(w, 2)$ & $w\exp\{-1.1w\}$ \\
$F_2(w, 1)$ & $\exp\{-2w\}$ & $F_2(w, 2)$ & $\exp\{-2.2w\}$ \\
$\nu(w, 1)$ & $\exp\{-w\}$ & $\nu(w, 2)$ & $\exp\{-w\}$ \\
\hline
\end{tabular}
\end{table}
\begin{figure}[h!]
	\centering
	\includegraphics[width=4.5in]{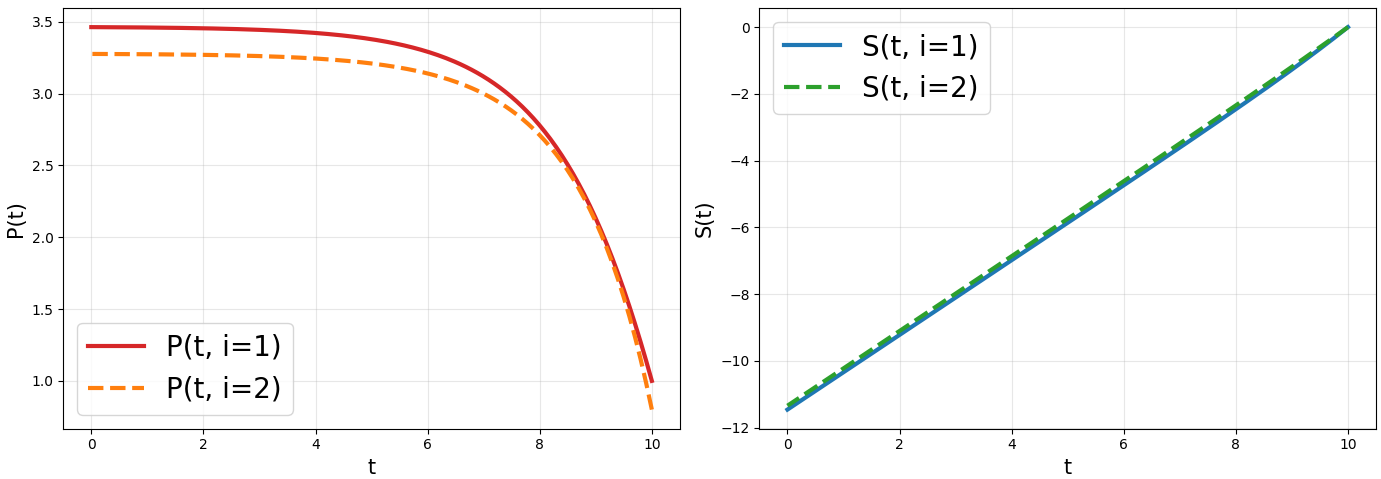}
	\caption{$P(t,i=1)$, $P(t,i=2)$, $S(t,i=1)$ and $S(t,i=2).$}
	\label{fig:P(t)andS(t)}
\end{figure}
\begin{figure}[h!]
	\centering
	\includegraphics[width=4.5in]{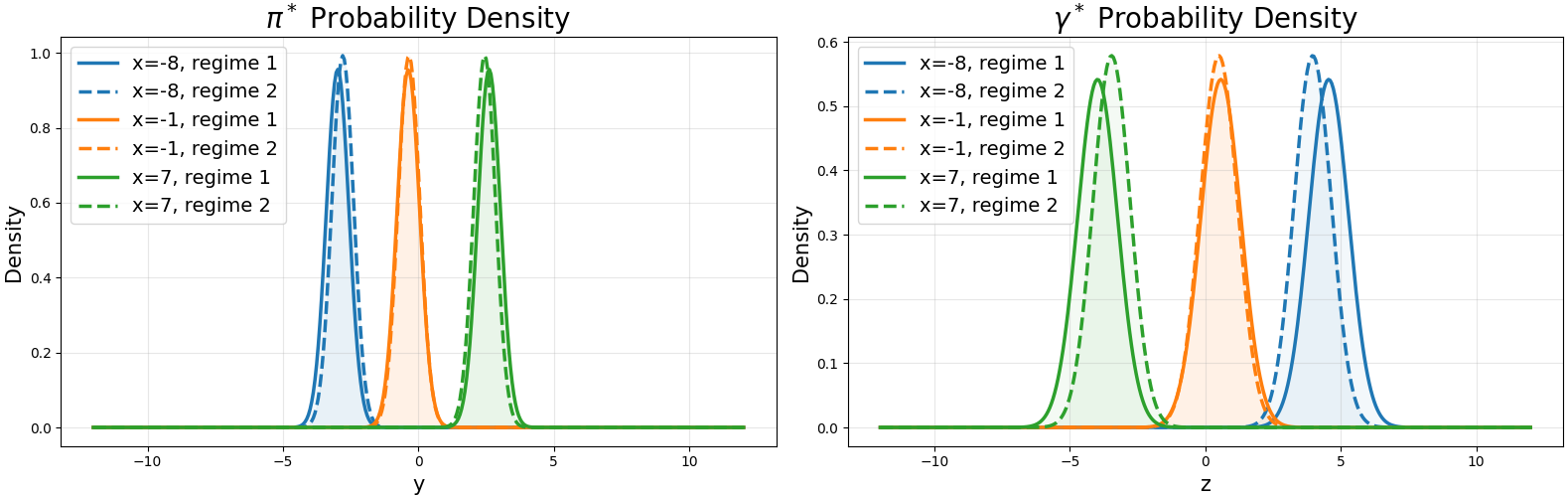}
	\caption{Optimal strategies $(\boldsymbol{\pi}^*, \boldsymbol{\gamma}^*)$ under regime $i\in \{1,2\}$.}
	\label{fig:Optimal strategies}
\end{figure}

We are interested in the relationship between the value function under exploratory strategies and that under classical strategies. To this end, we discuss the problem within the framework of the regime-switching linear-quadratic ZSSDG. We suppose that in the classical case, the value function takes the form $V^{cl}(t, x, i) = \frac{1}{2} P(t, i) x^2 + S(t, i)$. At this point, the following proposition holds.
\begin{proposition}\label{Proposition 2}
Under Assumption \ref{Assumption 3}, the classical optimal strategies in regime $i$ are given by
$$\left\{\begin{array}{ll}
y^*={\displaystyle\frac{c(t, i) e(t, i) - 2b(t, i) d(t, i)}{4a(t, i)d(t, i) - c^2(t, i)}}x, \\	
z^*={\displaystyle\frac{b(t, i) c(t, i) - 2a(t, i) e(t, i)}{4a(t, i)d(t, i) - c^2(t, i)}}x,
\end{array}\right.$$
where $P(t, i)$ and $S(t, i)$ are determined by the coupled ODE system:
$$\left\{\begin{array}{ll}
{\displaystyle\frac{1}{2}}P_t(s, i)=- {\displaystyle\frac{P(s, i)}{2}}\left( C(i)^2 + {\displaystyle\int}_{\mathbb{R}^+} E(i, w)^2 \nu(dw) \right)\\
\qquad\qquad\quad+{\displaystyle\frac{a(s, i)e(s, i)^2 + b(s, i)^2d(s, i) - b(s, i)c(s, i)e(s, i)}{4a(s, i)d(s, i) - c^2(s, i)}}\\
\qquad\qquad\quad-Q_1(i) - P(s, i)A(i) - {\displaystyle\frac{1}{2}}\sum_{j=1}^m q_{ij} P(s, j),\\
S(s, i)\equiv 0.
\end{array}\right.$$
\end{proposition}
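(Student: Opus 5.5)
We argue directly from the classical (non-exploratory) upper and lower HJBI equations of the regime-switching jump-diffusion LQ game, taking the quadratic ansatz $V^{cl}(t,x,i)=\frac12 P(t,i)x^2+S(t,i)$. In the classical game the Hamiltonian is $\mathcal{H}(t,x,i,V_x,V_{xx};y,z;V(t,\cdot,i))+\sum_j q_{ij}V(t,x,j)$, with $\mathcal{H}$ exactly as in the proof of Theorem \ref{Theorem 3}. The Isaacs condition becomes a pointwise sup--inf over $(y,z)\in\mathbb{R}^2$ instead of over densities.

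\textbf{Step 1: reduce the Hamiltonian to a quadratic form.} Substitute the ansatz into $\mathcal{H}$. Since $V$ is quadratic in $x$, the jump integrand reduces to
\[
V(x+\eta)-V(x)-\eta V_x=\tfrac12 P\eta^2, \qquad \eta=E x+F_1y+F_2z .
\]
The $(y,z)$-dependent part of $\mathcal{H}$ is then exactly
\[
a(t,i)y^2+d(t,i)z^2+c(t,i)yz+b(t,i)xy+e(t,i)xz,
\]
using the coefficients introduced before Assumption \ref{Assumption 3}. The remaining part is
\[
\Big[Q_1+PA+\tfrac12P\big(C^2+\textstyle\int E^2\nu(dw)\big)\Big]x^2+\tfrac12\sum_j q_{ij}P(t,j)x^2+\sum_j q_{ij}S(t,j).
\]
This is the same algebra as in the proof of Theorem \ref{Theorem 3}, with the entropy terms and the Gaussian averaging removed.

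\textbf{Step 2: solve the static saddle point.} Under Assumption \ref{Assumption 3} we have $a<0$ and $d>0$, so the quadratic form is strictly concave in $y$ and strictly convex in $z$. A concave--convex quadratic on $\mathbb{R}^2$ has a unique saddle point, and sup--inf equals inf--sup there, so the Isaacs condition holds. The first-order conditions are
\[
2ay+cz+bx=0,\qquad cy+2dz+ex=0 .
\]
Since $a<0<d$, the determinant $4ad-c^2$ is strictly negative, hence nonzero. Cramer's rule then gives the stated feedback $y^*,z^*$. These coincide with the means in Theorem \ref{Theorem 3}, which I would remark serves as a consistency check.

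\textbf{Step 3: identify the ODEs.} Evaluate the quadratic form at the saddle point. A direct computation gives the value
\[
-\frac{ae^2+b^2d-bce}{4ad-c^2}\,x^2 .
\]
Insert this into $V_t+\mathcal{H}+\sum_j q_{ij}V(t,x,j)=0$ and match the coefficients of $x^2$ to obtain the stated Riccati-type ODE for $P$, with $P(T,i)=2Q_2(i)$. The constant terms give $S_t(t,i)+\sum_j q_{ij}S(t,j)=0$ with $S(T,i)=0$. This linear system has the unique solution $S\equiv0$, because it has no source term, unlike the exploratory case where the entropy contributes the $\lambda$-dependent terms. Finally, a standard verification argument (It\^o--L\'evy with regime switching, localized as in the proof of Theorem \ref{Policy improvement for player $I$}) confirms that $V^{cl}$ is the value and $(y^*,z^*)$ is the saddle point.

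\textbf{Main obstacle.} The delicate step is the algebra of Step 3. The saddle value must produce exactly the combination $ae^2+b^2d-bce$ over $4ad-c^2$ with the correct sign and the factors $\tfrac12$. I would verify it by substituting back the first-order conditions, which give the identity
\[
\text{value}=\tfrac12\,(bxy^*+exz^*).
\]
A secondary point is that Assumption \ref{Assumption 3} involves $P$ itself. I would therefore state it along the solution of the Riccati system, assuming that solution exists on $[0,T]$, as the paper implicitly does.
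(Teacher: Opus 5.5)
Your proposal is correct and is the standard argument the paper has in mind when it omits this proof as standard. It is the proof of Theorem \ref{Theorem 3} with the entropy terms and Gaussian averaging removed: the quadratic ansatz, reduction of the Hamiltonian to $ay^2+dz^2+cyz+bxy+exz$, the pointwise saddle point guaranteed by $a<0<d$, and matching of $x^2$ and constant coefficients. Your saddle value $-\frac{ae^2+b^2d-bce}{4ad-c^2}x^2$ (via $\tfrac12(bxy^*+exz^*)$) and your argument that the source-free linear system with $S(T,i)=0$ forces $S\equiv 0$ both check out.
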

\begin{proof}
The proof is standard and thus omitted here.
\end{proof}
By comparing the ODE system for $S(s, i)$ in Theorem \ref{Theorem 3} and Proposition \ref{Proposition 2}, it is evident that as the temperature parameters $\lambda_1, \lambda_2 \to 0^+$, the source terms in the exploratory ODE for $S(s, i)$ vanish. The system reduces to $S_t(s, i) + \sum_{j=1}^m q_{ij} S(s, j) = 0$ with $S(T, i) = 0$, which yields the unique solution $S(s, i) \equiv 0$. Therefore, we have
\begin{equation*}
\begin{aligned}
&\lim_{\lambda_1,\lambda_2\to0^+}\left(V(s, x, i)-V^{cl}(s, x, i)\right)\\
=&\lim_{\lambda_1,\lambda_2\to0^+}\bigg[-\frac{\lambda_2 - \lambda_1}{2} - \frac{\lambda_1}{2}\ln\left( -\frac{\pi e\lambda_1}{a(s)} \right)+ \frac{\lambda_2}{2}\ln\left( \frac{\pi e\lambda_2}{d(s)} \right)- \sum_{j=1}^m q_{ij} S(s, j)\bigg]=0.
\end{aligned}
\end{equation*}
The above results indicate that, at least in the setting of regime-switching linear-quadratic ZSSDG, the optimal exploratory strategy $\boldsymbol{\pi}^*$ (resp. $\boldsymbol{\gamma}^*$) converges to the classical strategy $y^*$ (resp. $z^*$) as the temperature parameter $\lambda_1$ (resp. $\lambda_2$) decreases, and the exploratory value function also converges to the classical value function as the temperature parameter diminishes.

It can be observed that the optimal strategies $(\boldsymbol{\pi}^*, \boldsymbol{\gamma}^*)$ for player $I$ and player $II$ both follow a Gaussian distribution, and their means are consistent with the optimal strategies under the classical scenario. Compared with the studies by \cite{wang2020reinforcement}, \cite{sun2023reinforcement}, and \cite{gao2024reinforcement}, the optimal strategy pair $(\boldsymbol{\pi}^*, \boldsymbol{\gamma}^*)$ obtained in this paper indicates that the jump term and regime-switching mechanics play a crucial role: they not only affect the variance of the optimal strategies but also influence their means through the coupled ODEs.

Specifically, the variance of the optimal strategy $\boldsymbol{\pi}^*$ for player $I$ in regime $i$ is only positively correlated with the weight parameters $(R_1(i), D_1(i), F_1(i, w))$ of its own control terms and the temperature parameter $\lambda_1$. This implies that the stronger player $I$'s ability to control the system within a specific regime, the higher the uncertainty of its strategy. In contrast, the variance of the optimal strategy $\boldsymbol{\gamma}^*$ for player $II$ is negatively correlated with the weight parameters $(R_2(i), D_2(i), F_2(i, w))$ of its own control terms, while being positively correlated with the temperature parameter $\lambda_2$. This suggests that the stronger player $II$'s control over the system, the more deterministic its strategy tends to be.

As $\lambda_1$ and $\lambda_2$ decrease, the optimal strategies approach the classical ones, and the exploratory value function converges to the classical value function. Conversely, as $\lambda_1$ and $\lambda_2$ increase, the variances of both players' optimal strategies rise, because a higher temperature parameter enlarges the search range over the decision space. In particular, when $E(i, w)=F_1(i, w)=F_2(i, w)=0$ for all $i$ (i.e., there is no jump term) and there is only a single regime ($\mathcal{S} = \{1\}$), our results reduce to those reported in \cite{sun2023reinforcement}.
\begin{remark}
The mean of the optimal feedback strategy pair $(\boldsymbol{\pi}^*, \boldsymbol{\gamma}^*)$ is linear in the state $x$. It can be verified that under this strategy pair, the exploratory regime-switching SDG is well-defined.
\end{remark}
\section{Applications to optimal investment game}\label{Section 5}
In this section, we apply the exploratory framework to a ZSSDG between an investor and the market under a regime-switching jump-diffusion model. For the classical formulation, we refer to \cite{savku2022stochastic}.
\subsection{Model}
Consider a financial market consisting of one risk-free asset (bond) and one risky asset. The states of the economy are modeled by a continuous-time Markov chain $\theta$ taking values in a finite state space $\mathcal{S} = \{1, 2, \ldots, m\}$, which represents different macroeconomic environments (e.g., bull and bear markets).

The price of the risk-free asset $S_0(s)$ evolves according to
$$
dS_0(s) = S_0(s) r(\theta_s) ds, \, S_0(0) = s_0 > 0,
$$
where $r(\theta_s)$ is the regime-dependent continuous compound risk-free interest rate.
The market acts as a player (player $II$) that can influence the expected return rate of the risky asset, denoted by the control variable $z_s \in \mathcal{Z} = \mathbb{R}^+$. The price of the risky asset $S_1(s)$ is modeled by the following regime-switching jump-diffusion process:
\begin{equation*}
\begin{aligned}
dS_1(s) =&z_sS_1(s) ds + \sigma(\theta_{s})S_1(s) dW_s\\
&+ \int_{\mathbb{R}_+} S_1(s-) \eta(\theta_{s-}, w) \tilde{N}(ds, dw), \, S_1(0) = s > 0,
\end{aligned}
\end{equation*}
where $\sigma(\theta_{s}) > 0$ is the volatility, and $\eta(\theta_{s-}, w)$ is the jump size of the risky asset.

The investor (player $I$) controls the proportion of wealth invested in the risky asset, denoted by $y_s \in \mathcal{Y} = [0,1]$. The remaining wealth proportion $1 - y_s$ is invested in the risk-free asset. The wealth process $X_s$ of the investor follows
$$
\left\{\begin{array}{ll}
dX_s = X_{s}\big( (1-y_s)r(\theta_s) + y_s z_s \big) ds + y_s \sigma(\theta_{s})X_{s} dW_s \\
\qquad\quad + \displaystyle\int_{\mathbb{R}_+} y_s X_{s-}\eta(\theta_{s-}, w) \tilde{N}(ds, dw), \\
X_t = x, \quad \theta_t = i.
\end{array} \right.
$$
The investor aims to maximize the expected utility of terminal wealth, while the market, acting as a fictitious opponent, seeks to minimize this expected utility. The payoff function of the classical zero-sum game is defined as
$$
J^{cl}(t, x, i; y, z) := \mathbb{E}_{t, x, i}\big[U(X_T, \theta_T)\big],
$$
where $U(\cdot, \cdot)$ is a regime-dependent utility function.

Now we introduce the exploratory ZSSDG framework. The investor and the market adopt stochastic feedback controls $\boldsymbol{\pi}_s$ and $\boldsymbol{\gamma}_s$, respectively. The exploratory wealth process is given by
\begin{equation}\label{Investment_SDE}
\left\{\begin{array}{ll}
dX_s = \left\{X_{s}\displaystyle \int_{\mathcal{Y}}\left[\int_{\mathcal{Z}} \big((1-y)r(\theta_s) + y z \big)\boldsymbol{\gamma}_sdz\right]\boldsymbol{\pi}_sdy \right\}ds \\
\qquad\quad +  \left( \displaystyle\int_{\mathcal{Y}} y^2 \sigma^2(\theta_s) \boldsymbol{\pi}_s dy \right)^{1/2}X_{s} dW_s \\
\qquad\quad + \displaystyle\int_{\mathbb{R}_+\times[0,1]} G_{\boldsymbol{\pi}_s}(u) X_{s-} \eta(\theta_{s-}, w) \tilde{N}^\prime(ds, dw, du), \\
X_t = x, \quad \theta_t = i.
\end{array} \right.
\end{equation}
The exploratory payoff function with entropy regularization is given by
\begin{equation}\label{Investment_pay}
\begin{aligned}
J\left(t, x, i; {\boldsymbol{\pi}}, {\boldsymbol{\gamma}}\right)
=&\E_{t, x, i}\bigg[\int_t^T\bigg(
-\lambda_1\int_{\mathcal{Y}} \boldsymbol{\pi}_s\ln \boldsymbol{\pi}_s \,dy\\&+\lambda_2\int_{\mathcal{Z}} \boldsymbol{\gamma}_s\ln \boldsymbol{\gamma}_s \,dz\bigg)ds+U(X _T, \theta_T)\bigg].
\end{aligned}
\end{equation}
\subsection{Numerical example}
Solving the exploratory SDG \eqref{Investment_SDE}-\eqref{Investment_pay} is highly challenging. This subsection will adopt the RL algorithm proposed in Section \ref{Section 4} to solve for the optimal strategies and the value function. We assume that the continuous-time Markov chain consists of two regimes ($m=2$), representing the bull market and the bear market, respectively.

For the optimal investment game, we adopt the standard power utility function given by
\begin{equation}
U(x) = \frac{x^\zeta}{\zeta}, \,\zeta < 1,\, \zeta \neq 0.
\end{equation}
Here, we assume the investor's intrinsic risk preference $\zeta$ remains constant, meaning the terminal utility evaluation is independent of the terminal macroeconomic regime. However, due to the regime-dependent market parameters and the Markovian transitions, the investor's optimal exploratory policy and the resulting value function will still dynamically adapt to the current regime $i$.

Table \ref{Table 2} presents the parameter settings. Figure \ref{fig:action_distributions_over_states} illustrates the optimal exploratory policies, $\boldsymbol{\pi}^*(y)$ and $\boldsymbol{\gamma}^*(z)$, for the investor (player $I$) and the market (player $II$) across different time-state combinations ($t=0.5$; $x=0.5$ and $x=1.0$).
\begin{table}[h!]
\caption{System parames}
\label{Table 2}
\centering
\begin{tabular}{@{}llll@{}}
\hline
\textbf{Params} & \textbf{Value} & \textbf{Params} & \textbf{Value} \\
\hline
$r(1)$ & $0.7$ & $r(2)$ & $0.4$ \\
$\sigma(1)$ & $0.15$ & $\sigma(2)$ & $0.30$ \\
$q_{11}$ & $-0.5$ & $q_{12}$ & $0.5$ \\
$q_{21}$ & $1.0$ & $q_{22}$ & $-1.0$ \\
$\lambda_1$ & $0.1$ & $\lambda_2$ & $0.1$ \\
$\eta(1, w)$ & $-0.1$ & $\eta(2, w)$ & $-0.3$ \\
$T$ & $1.0$ & $\zeta$ & $0.5$ \\
\hline
\end{tabular}
\end{table}
\begin{figure}[h!]
\centering
\includegraphics[width=4.5in]{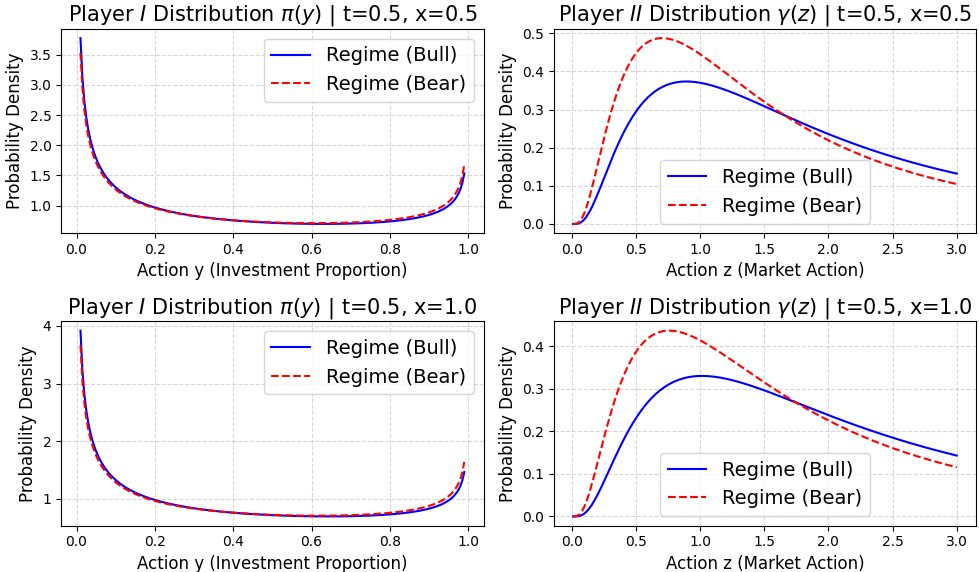}
\caption{Optimal strategy $\boldsymbol{\pi}^*$ and  $\boldsymbol{\gamma}^*$.}
\label{fig:action_distributions_over_states}
\end{figure}

For the investor, the probability density of the optimal strategy $\boldsymbol{\pi}^*(y)$ is highly concentrated near $y=0$. This indicates an extremely conservative investment stance, where the investor allocates almost no wealth to risky assets. Interestingly, due to the boundedness of the action space $[0, 1]$, the distribution exhibits a slight U-shaped characteristic; however, the density at the lower bound is exceptionally high, dominating the overall risk-averse strategy. Furthermore, comparing the top and bottom rows of the figure (for $x=0.5$ and $x=1.0$), it is evident that the investor's strategy is largely unaffected by the wealth level $x$. This is perfectly consistent with the wealth-independent property of the chosen power utility function.

In contrast, the market's optimal strategy $\boldsymbol{\gamma}^*(z)$ exhibits a unimodal, bell-shaped distribution, characterized by a high density in the central region and a gradual decay in the tails. The peak of this distribution consistently resides within the $z < 1$ interval. Furthermore, a clear regime-dependent behavior is observed: compared to the bull market regime (solid blue line), the mean of the market's optimal strategy $\boldsymbol{\gamma}^*(z)$ shifts significantly to the left under the bear market regime (dashed red line). This suggests that within the game-theoretic framework, the expected return of the risky asset is lower in a bear market and higher in a bull market.

These results offer a fascinating comparison to the classical, non-exploratory framework studied by \cite{savku2022stochastic}. In their research, the optimal strategies are purely deterministic, yielding $y^*=0$ for the investor and $z^*=r(\theta_s)$ for the market. Our numerical distributions align closely with these theoretical findings, as the highest density regions of our graphs hover around these classical point masses. However, they are not strictly equal. This expected divergence is precisely due to the introduction of entropy regularization in our framework. The entropy terms intrinsically prevent the policies from collapsing into deterministic Dirac measures, promoting a broader exploration of the continuous action spaces $\mathcal{Y}$ and $\mathcal{Z}$ while still gravitating toward the classical optimal solutions.

This analysis explores the behavior of the value function $V(x, i)$ at $t=0.5$ under different economic regimes within the exploratory ZSSDG framework through numerical examples. As illustrated in Figure \ref{fig:Value function}, regardless of whether it is a Bull or Bear regime, the value function exhibits a monotonically increasing trend as the initial wealth $x$ increases. This intuitively validates the positive contribution of wealth accumulation to the expected terminal utility. More importantly, the figure clearly demonstrates that at any given initial wealth level, the value function in the Bull regime is consistently and significantly higher than that in the Bear regime (i.e., the blue solid line lies strictly above the red dashed line). This finding is consistent with financial and economic intuition: although the market, acting as the minimizing player (player $II$) in our ZSSDG framework, continuously applies adversarial controls to suppress the investor's utility, the superior potential asset returns and underlying market fundamentals in the Bull regime remain dominant, thereby yielding a higher expected utility for the investor. Simultaneously, the variation in the curve's slope across different wealth intervals, along with its overall smooth extension, reflects the stabilizing effect of the exploratory stochastic strategies¡ªfollowing the introduction of entropy regularization¡ªin balancing jump-diffusion risks and regime-switching shocks. This effectively characterizes the dynamic features of the game equilibrium across different market cycles.
\begin{figure}[h!]
	\centering
\includegraphics[width=4in]{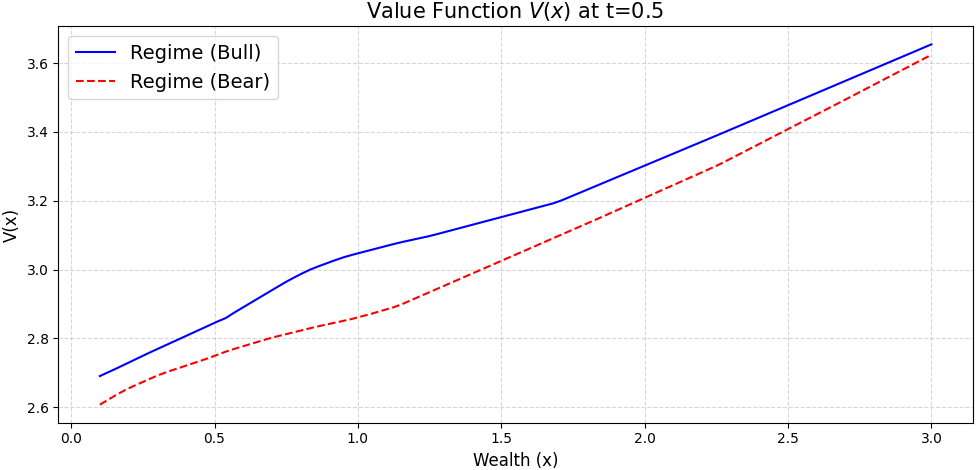}
	\caption{Value function $V(t,x)$.}
	\label{fig:Value function}
\end{figure}

To evaluate the training dynamics and the stability of our proposed Actor-Critic algorithm in solving the highly challenging exploratory SDG defined by \eqref{Investment_SDE} and \eqref{Investment_pay}, Figure \ref{fig:Network parameter} illustrates the $L_2$ norm evolution for the neural network parameters over $300$ episodes. The critic network parameters ($\psi$), responsible for approximating the highly complex exploratory value function $J(t, x, i; \boldsymbol{\pi}, \boldsymbol{\gamma})$, undergo an initial phase of significant adaptation before stabilizing around an $L_2$ norm of $8.3$. In contrast, the parameters for Actor $I$ ($\phi_1$, governing the investor's policy $\boldsymbol{\pi}_s$) and Actor $II$ ($\phi_2$, governing the market's policy $\boldsymbol{\gamma}_s$) remain remarkably stable throughout the entire training process. This bounded behavior in the parameter space demonstrates the robustness of the policy update mechanism, effectively preventing gradient explosion despite the complex jump-diffusion dynamics.
\begin{figure}[h!]
	\centering
\includegraphics[width=4in]{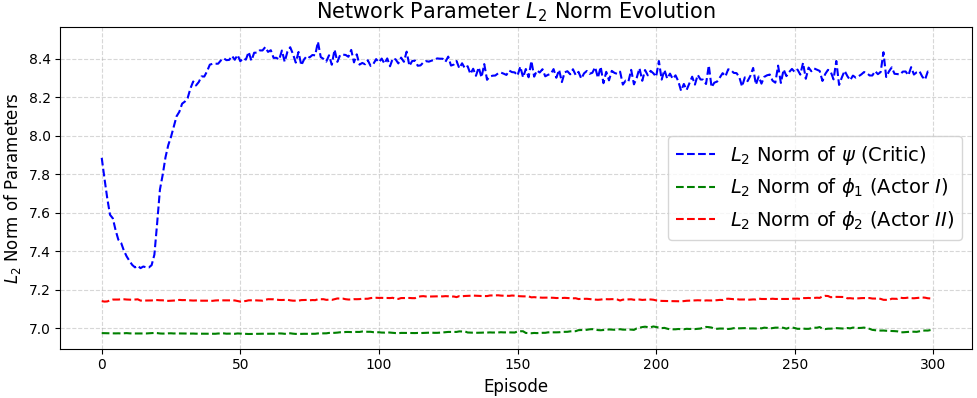}
	\caption{Network parameter $L_2$ norm evolution.}
	\label{fig:Network parameter}
\end{figure}
\begin{figure}[h!]
	\centering
\includegraphics[width=4in]{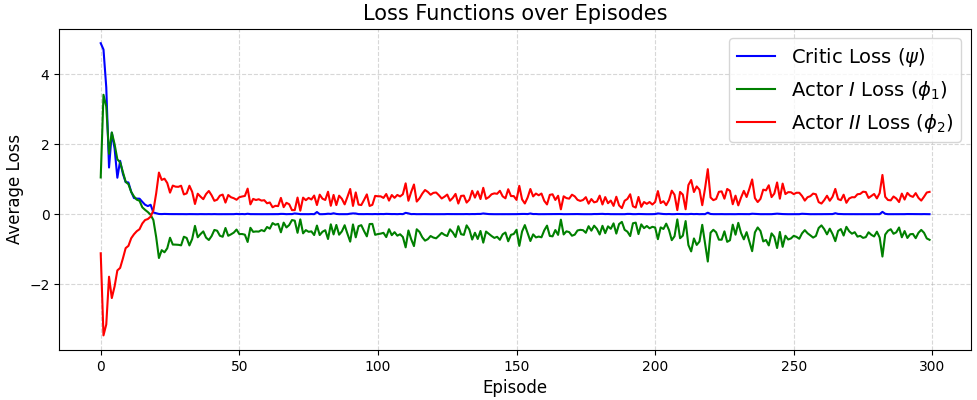}
	\caption{Loss function over episodes.}
	\label{fig:Loss function}
\end{figure}

Figure \ref{fig:Loss function} further confirms the algorithm's convergence by plotting the average loss functions over the training episodes. The critic's loss drops rapidly to near-zero within the first $25$ episodes, indicating that the network swiftly and accurately learns to evaluate the temporal-difference error. Crucially, the loss trajectories of Actor $I$ and Actor $II$ exhibit a distinct, mirror-like oscillation during the early stages of training. This symmetric fluctuation directly reflects the minimax nature of the zero-sum game: the investor seeks to maximize the expected utility while the market seeks to minimize it. As training progresses, these adversarial oscillations gradually dampen and converge toward a narrow band near zero, indicating that the parameterized policies $\boldsymbol{\pi}_s$ and $\boldsymbol{\gamma}_s$ have successfully reached a Nash equilibrium for the exploratory game.
\section{Conclusions}\label{Section 6}
In this paper, we develop an exploratory RL framework for ZSSDGs under a regime-switching jump-diffusion model. By incorporating entropy regularization, we derive the expressions for the exploratory SDEs and the payoff function, introducing a distributional control mechanism to effectively address inherent uncertainties, estimation errors, and structural shifts. We first apply this framework theoretically to a regime-switching linear-quadratic ZSSDG. To address problems in general settings, we design an Actor-Critic RL algorithm and utilize it to solve a regime-switching investment game.

Several promising directions for future research remain open, including investigating non-zero-sum SDGs with regime switching; exploring SDGs under constraints; analyzing the convergence between exploratory and classical value functions in general settings; extending the framework to cooperative games; and exploring broader engineering and financial applications in risk management and control systems.

\begin{appendices}
\section{Proof of Proposition \ref{Proposition 1}}\label{Appendix B}
\setcounter{equation}{0}
\renewcommand{\theequation}{B.\arabic{equation}}
\begin{proof}
\textbf{1. Local Lipschitz continuity for $\tilde{b}$ and $\tilde{\sigma}$:}
Under Assumption \ref{Assumption 1}, for all $t$, $x\in\mathbb{R}_K^d$, $x^\prime\in\mathbb{R}_K^d$, $i \in \mathcal{S}$, and admissible controls, we have
\begin{equation*}
\begin{aligned}
&\tilde{b}(t, x, i; \boldsymbol{\pi}(\cdot\mid t,x,i),\boldsymbol{\gamma}(\cdot\mid t,x,i)) - \tilde{b}(t, x^\prime, i; \boldsymbol{\pi}(\cdot\mid t,x^\prime,i),\boldsymbol{\gamma}(\cdot\mid t,x^\prime,i))\\
=&\int_{\mathcal{Y}} \left[\int_{\mathcal{Z}}  \Big(b(t, x, i; y, z) - b(t, x'; i; y, z)\Big) \boldsymbol{\gamma}(z\mid t,x,i)\,dz \right]\boldsymbol{\pi}(y\mid t,x,i)\,dy\\
&+\int_{\mathcal{Y}} \left[\int_{\mathcal{Z}} b(t, x^{\prime}, i; y, z) \left(\boldsymbol{\gamma}(z\mid t,x,i) - \boldsymbol{\gamma}(z\mid t,x^{\prime},i)\right)\,dz\right]\boldsymbol{\pi}(y\mid t,x,i) \, dy\\
&+ \int_{\mathcal{Y}}\left[ \int_{\mathcal{Z}} b(t, x^{\prime}, i; y, z) \boldsymbol{\gamma}(z\mid t,x^{\prime},i)\,dz\right] \left(\boldsymbol{\pi}(y\mid t,x,i) - \boldsymbol{\pi}(y\mid t,x^{\prime},i)\right)\,dy\\
\leq&  \int_{\mathcal{Y}}\left[ \int_{\mathcal{Z}}  \big|b(t, x, i; y, z) - b(t, x^{\prime}, i; y, z)\big| \boldsymbol{\gamma}(z\mid t,x,i)\, dz\right]\boldsymbol{\pi}(y\mid t,x,i) \, dy\\
&+ \int_{\mathcal{Y}} \left[\int_{\mathcal{Z}} b(t, x^{\prime}, i; y, z) \big|\boldsymbol{\gamma}(z\mid t,x,i) - \boldsymbol{\gamma}(z\mid t,x^{\prime},i)\big|\,dz\right]\boldsymbol{\pi}(y\mid t,x,i) \, dy\\
&+\int_{\mathcal{Y}}\left[ \int_{\mathcal{Z}} b(t, x^{\prime}, i; y, z) \boldsymbol{\gamma}(z\mid t,x^{\prime},i) \,dz\right] \big|\boldsymbol{\pi}(y\mid t,x,i) - \boldsymbol{\pi}(y\mid t,x^{\prime},i)\big|\,dy\\
\leq & C_K \big| x - x^{\prime} \big|+ M_b L_{\boldsymbol{\gamma}} \big| x - x^{\prime} \big| +M_b L_{\boldsymbol{\pi}} \big| x - x^{\prime} \big|
\leq  C \big| x - x^{\prime} \big|.
\end{aligned}
\end{equation*}
Similarly, we can prove that $\tilde{\sigma}(t, x, i;\boldsymbol{\pi}(\cdot\mid t,x,i),\boldsymbol{\gamma}(\cdot\mid t,x,i))$ is Local Lipschitz continuous for each $i \in \mathcal{S}$.

\textbf{2. Linear growth in $x$ for $\tilde{b}$ and $\tilde{\sigma}$:}
Under Assumption \ref{Assumption 1}, for all $t$, $x$, $i \in \mathcal{S}$, and admissible controls, we have
$$\begin{aligned}
&\big| \tilde{b}(t, x, i; \boldsymbol{\pi}(\cdot\mid t,x,i),\boldsymbol{\gamma}(\cdot\mid t,x,i)) \big|
\\
\leq& \int_{\mathcal{Y}} \left[\int_{\mathcal{Z}} \big|b(t, x, i; y, z)\big| \boldsymbol{\gamma}(z\mid t,x,i)\,dz \right]\boldsymbol{\pi}(y\mid t,x,i)\,dy\\
\leq& \int_{\mathcal{Y}}\left[ \int_{\mathcal{Z}} (C_1 \big| x \big| + C_2)\boldsymbol{\gamma}(z\mid t,x,i)\,dz\right]\boldsymbol{\pi}(y\mid t,x,i)\,dy
= C_1 \big| x \big| + C_2.
\end{aligned}$$
Similarly, we can prove that $\tilde{\sigma}(t, x, i; \boldsymbol{\pi}_{t, x, i},\boldsymbol{\gamma}_{t, x, i})$ has linear growth in $x$.

\textbf{3. Local Lipschitz continuity for jump:}
Under Assumption \ref{Assumption 1} and \ref{Assumption 2}, for all $t$, $x\in\mathbb{R}_K^d$, $x^\prime\in\mathbb{R}_K^d$, $i \in \mathcal{S}$, and admissible controls, using the triangle inequality, we observe that for each $k = 1, \cdots, \ell$,
$$\begin{aligned}
&|\eta_k(t, x, i;  G_{\boldsymbol{\pi}}(t,x,i,u),  M_{\boldsymbol{\gamma}}(t,x,i,v); w)\\
&-\eta_k(t, x^{\prime}, i; G_{\boldsymbol{\pi}}(t,x^{\prime},i,u),  M_{\boldsymbol{\gamma}}(t,x^{\prime},i,v); w)|\\
\leq &| \eta_k(t, x, i;  G_{\boldsymbol{\pi}}(t,x,i,u),  M_{\boldsymbol{\gamma}}(t,x,i,v); w)\label{.1}\\
&\quad- \eta_k(t, x, i;  G_{\boldsymbol{\pi}}(t,x^{\prime},i,u),  M_{\boldsymbol{\gamma}}(t,x^{\prime},i,v); w) | \quad \text{(Te 1)}\\
&+ | \eta_k(t, x, i;  G_{\boldsymbol{\pi}}(t,x^{\prime},i,u),  M_{\boldsymbol{\gamma}}(t,x^{\prime},i,v); w)\\
&\quad- \eta_k(t, x^{\prime}, i;  G_{\boldsymbol{\pi}}(t,x^{\prime},i,u),  M_{\boldsymbol{\gamma}}(t,x^{\prime},i,v); w) | \quad \text{(Te 2)}.
\end{aligned}$$
From Assumption \ref{Assumption 2}, we can derive the property of (Te 1):
$$\begin{aligned}
&|\eta_k(t, x, i;  G_{\boldsymbol{\pi}}(t,x^{\prime},i,u),  M_{\boldsymbol{\gamma}}(t,x^{\prime},i,v); w)\\
&- \eta_k(t, x, i;  G_{\boldsymbol{\pi}}(t,x,i,u),  M_{\boldsymbol{\gamma}}(t,x,i,v); w)|\\
\leq& L_{\eta_k} (\left| G_{\boldsymbol{\pi}}(t,x^{\prime},i,u) -  G_{\boldsymbol{\pi}}(t,x,i,u)\right|\\
&+ \left| M_{\boldsymbol{\gamma}}(t,x^{\prime},i,v) -  M_{\boldsymbol{\gamma}}(t,x,i,v)\right|)
\leq L_{\eta_k} (L_G + L_M) \left|x - x'\right|.
\end{aligned}$$
From Assumption \ref{Assumption 1}, we can derive the property of (Te 2):
$$\begin{aligned}
&|\eta_k(t, x, i;  G_{\boldsymbol{\pi}}(t,x^{\prime},i,u),  M_{\boldsymbol{\gamma}}(t,x^{\prime},i,v); w)\\
&- \eta_k(t, x^{\prime}, i;  G_{\boldsymbol{\pi}}(t,x^{\prime},i,u),  M_{\boldsymbol{\gamma}}(t,x^{\prime},i,v); w)|
\leq L_{x}\left|x-x^{\prime}\right|.
\end{aligned}$$
Finally, we have
$$\begin{aligned}
&\int_{\mathbb{R}^+ \times[0,1]^{n1}\times[0,1]^{n2}}\left|\eta_k(t, x, i;  G_{\boldsymbol{\pi}}(t,x,i,u),  M_{\boldsymbol{\gamma}}(t,x,i,v); w)\right.\\
&\left.~-\eta_k(t, x^{\prime}, i; G_{\boldsymbol{\pi}}(t,x^{\prime},i,u),  M_{\boldsymbol{\gamma}}(t,x^{\prime},i,v); w)\right|^p\nu_k(d w) dudv\\
\leq&\int_{\mathbb{R}^+ \times[0,1]^{n1}\times[0,1]^{n2}}L\left| x-x'\right|^p\nu_k(d w) dudv
\leq L\left| x-x'\right|^p.
\end{aligned}$$

\textbf{4. Linear growth in $x$ for jump:}
Under Assumption \ref{Assumption 1}, for all $t$, $x$, $i \in \mathcal{S}$, and admissible controls, we observe that for each $k = 1, \cdots, \ell$,
$$\begin{aligned}
&\int_{\mathbb{R}^+\times[0,1]^{n1}\times[0,1]^{n2}}
\quad\left|\eta_k\left(t, x, i; G_{\boldsymbol{\pi}}(t,x,i,u), M_{\boldsymbol{\gamma}}(t,x,i,v); w\right)\right| \nu_k(dw) dudv\\
=&\int_{\mathcal{Y}}\left[\int_{\mathcal{Z}} \int_{\mathbb{R}^+} \big| \eta_k(t, x, i; y, z;w) \big| \nu_k(dw) \boldsymbol{\gamma}(z\mid t,x,i)\, dz\right]\boldsymbol{\pi}(y\mid t,x,i)\, dy.
\end{aligned}$$
From Assumption \ref{Assumption 1}, we have
$$\sum_{k=1}^\ell \int_{\mathbb{R}^+} \big| \eta_k(t, x, i; y, z;w) \big|\nu_k(dw) \leq C_1+C_2| x |.$$
Combining \textbf{1, 2, 3, 4} and applying Theorem 119 in \cite{situ2005theory} (along with the standard extension for piecewise constant c\`{a}dl\`{a}g processes to accommodate the finite-state continuous-time Markov chain $\alpha_t$), we can establish the existence and uniqueness of the strong solution to the regime-switching exploratory SDE.
\end{proof}




\end{appendices}


\bibliography{sn-bibliography}

\end{document}